\def\given{\,|\,}
\def\tr{\mathop{\text{tr}}\kern.2ex}
\long\def\comment#1{}
\def\tr{\mathop{\text{Tr}}}
\def\cS{{\mathcal{S}}}
\def\cP{{\mathcal{P}}}
\def\tr{{\text{Tr}}}
\def\dr{\displaystyle \rm}
\newcommand{\bel}{\begin{eqnarray}\label}
\newcommand{\eel}{\end{eqnarray}}
\newcommand{\bes}{\begin{eqnarray*}}
\newcommand{\ees}{\end{eqnarray*}}
\newcommand{\red}{\color{black}}
\newcommand{\la}{\langle}
\newcommand{\ra}{\rangle}
\renewcommand{\bd}{\ud}
\def \dkl{D_{\rm KL}}
\def\dr{\displaystyle \rm}
\def\reg{\sqrt{d^2H^3T}}
\def\lok{\log(dT/\zeta)}
\def\##1\#{\begin{align}#1\end{align}}
\def\$#1\${\begin{align*}#1\end{align*}}
\begin{document}

\title{\LARGE Provably Efficient Exploration in Policy Optimization}
\author
{\normalsize
Qi Cai\thanks{Northwestern University; 
\texttt{qicai2022@u.northwestern.edu}}
\qquad Zhuoran Yang\thanks{Princeton University; 
\texttt{zy6@princeton.edu}}
\qquad Chi Jin\thanks{Princeton University; 
\texttt{chij@princeton.edu}}
\qquad Zhaoran Wang\thanks{Northwestern University; 
\texttt{zhaoranwang@gmail.com}}
}
\date{}
\maketitle

\setlength{\abovedisplayskip}{6pt}
\setlength{\belowdisplayskip}{6pt}

\begin{abstract}
While policy-based reinforcement learning (RL) achieves tremendous successes in practice, it is significantly less understood in theory, especially compared with value-based RL. In particular, it remains elusive how to design a provably efficient policy optimization algorithm that incorporates exploration. To bridge such a gap, this paper proposes an \underline{O}ptimistic variant of the \underline{P}roximal \underline{P}olicy \underline{O}ptimization algorithm (OPPO), which follows an ``optimistic version'' of the policy gradient direction. This paper proves that, in the problem of episodic Markov decision process with linear function approximation, unknown transition, and adversarial reward with full-information feedback, OPPO achieves $\tilde{O}(\reg)$ regret. Here $d$ is the feature dimension, $H$ is the episode horizon, and $T$ is the total number of steps. To the best of our knowledge, OPPO is the first provably efficient~policy optimization algorithm that explores.

\end{abstract}

\section{Introduction}
Coupled with powerful function approximators such as neural networks, policy optimization plays a key role in the tremendous empirical successes of deep reinforcement learning \citep{silver2016mastering, silver2017mastering, duan2016benchmarking, openai2019dota, wang2018deep}. In sharp contrast, the theoretical understandings of policy optimization remain rather limited from both computational and statistical perspectives. More specifically, from the computational perspective, it remains unclear until recently whether policy optimization converges to the globally optimal policy in a finite number of iterations, even given infinite data. Meanwhile, from the statistical perspective, it still remains unclear how to attain the globally optimal policy with a finite regret or sample complexity. 
 
A line of recent work \citep{fazel2018global, yang2019global, abbasi2019politex, abbasi2019exploration, bhandari2019global, liu2019neural, agarwal2019optimality, wang2019neural} answers the computational question affirmatively by proving that a wide variety of policy optimization algorithms, such as policy gradient (PG)  \citep{williams1992simple, baxter2000direct, sutton2000policy}, natural policy gradient (NPG) \citep{kakade2002natural}, trust-region policy optimization (TRPO) \citep{schulman2015trust}, proximal policy optimization (PPO) \citep{schulman2017proximal}, and actor-critic (AC) \citep{konda2000actor}, converge to the globally optimal policy at sublinear rates of convergence, even when they are coupled with neural networks \citep{liu2019neural, wang2019neural}. However, such computational efficiency guarantees rely on the regularity condition that the state space is already well explored. Such a condition is often implied by assuming either the access to a ``simulator'' (also known as the generative model) \citep{koenig1993complexity, azar2011speedy, azar2012dynamic, azar2012sample, sidford2018near, sidford2018variance, wainwright2019variance} or finite concentratability coefficients \citep{munos2008finite, antos2008fitted, farahmand2010error, tosatto2017boosted, yang2019theoretical, chen2019information}, both of which are often unavailable in practice.


 In a more practical setting, the agent sequentially explores the state space, and meanwhile, exploits the information at hand by taking the actions that lead to higher expected total rewards. Such an exploration-exploitation tradeoff is better captured by the aforementioned statistical question regarding the regret or sample complexity, which remains even more challenging to answer than the computational question. As a result, such a lack of statistical understanding hinders the development of more sample-efficient policy optimization algorithms beyond heuristics. In fact, empirically, vanilla policy gradient is known to exhibit a possibly worse sample complexity than random search \citep{mania2018simple}, even in basic settings such as linear-quadratic regulators. Meanwhile, theoretically, vanilla policy gradient can be shown to suffer from exponentially large variance in the well-known ``combination lock'' setting \citep{kakade2003sample, leffler2007efficient, azar2012dynamic}, which only has a finite state space. 
 
 In this paper, we aim to answer the following fundamental question: 
 \begin{center}
\textit{Can we design a policy optimization algorithm that incorporates exploration
and is provably sample-efficient?}
 \end{center}
To answer this question, we propose the first policy optimization algorithm that incorporates exploration in a principled manner. In detail, we develop an Optimistic variant of the PPO algorithm, namely OPPO. Our algorithm is also closely related to NPG and TRPO. At each update, OPPO solves a Kullback-Leibler (KL)-regularized policy optimization subproblem, where the linear component of the objective function is defined using the action-value function. As is shown subsequently, solving such a subproblem corresponds to one iteration of infinite-dimensional mirror descent \citep{nemirovsky1983problem} or dual averaging \citep{xiao2010dual}, where the action-value function plays the role of the gradient. To encourage exploration, we explicitly incorporate a bonus function into the action-value function, which quantifies the uncertainty that arises from only observing finite historical data. Through uncertainty quantification, such a bonus function ensures the (conservative) optimism of the updated policy. Based on NPG, TRPO, and PPO, OPPO only augments the action-value function with the bonus function in an additive manner, which makes it easily implementable in practice.

Theoretically, we establish the sample efficiency of OPPO in an episodic setting of Markov decision processes (MDPs) with full-information feedback, where the transition dynamics are linear in features \citep{yang2019sample, yang2019reinforcement, jin2019provably, ayoub2020model, zhou2020provably}. In particular, we allow the transition dynamics to be nonstationary within each episode. See also the work of \cite{du2019good, van2019comments, lattimore2019learning} for a related discussion on the necessity of the linear representation. In detail, we prove that OPPO attains a $\reg$-regret up to logarithmic factors, where $d$ is the feature dimension, $H$ is the episode horizon, and $T$ is the total number of steps taken by the agent. Note that such a regret does not depend on the numbers of states and actions, and therefore, allows them to be even infinite. In particular, OPPO attains such a regret without knowing the transition dynamics or accessing a ``simulator".
Moreover, we prove that, even when the reward functions are adversarially chosen across the episodes, OPPO attains the same regret in terms of competing with the globally optimal policy in hindsight \citep{cesa2006prediction, bubeck2012regret}. In comparison, existing algorithms based on value iteration, e.g., optimistic least-squares value iteration (LSVI) \citep{jin2019provably}, do not allow adversarially chosen reward functions. Such a notion of robustness partially justifies the empirical advantages of KL-regularized policy optimization \citep{neu2017unified, geist2019theory}. To the best of our knowledge, OPPO is the first provably sample-efficient policy optimization algorithm that incorporates exploration.

\subsection{Related Work}

Our work is based on the aforementioned line of recent work \citep{fazel2018global, yang2019global, abbasi2019politex, abbasi2019exploration, bhandari2019global, liu2019neural, agarwal2019optimality, wang2019neural} on the computational efficiency of policy optimization, which covers PG, NPG, TRPO, PPO, and AC. In particular, OPPO is based on PPO (and similarly, NPG and TRPO), which is shown to converge to the globally optimal policy at sublinear rates in tabular and linear settings, as well as nonlinear settings involving neural networks \citep{liu2019neural, wang2019neural}. However, without assuming the access to a ``simulator'' or finite concentratability coefficients, both of which imply that the state space is already well explored, it remains unclear whether any of such algorithms is sample-efficient, that is, attains a finite regret or sample complexity. In comparison, by incorporating uncertainty quantification into the action-value function at each update, which explicitly encourages exploration, OPPO not only attains the same computational efficiency as NPG, TRPO, and PPO, but is also shown to be sample-efficient with a $\reg$-regret up to logarithmic factors. 

Our work is closely related to another line of work \citep{even2009online, yu2009markov, neu2010bandit, neu2010online, zimin2013online, neu2012adversarial, rosenberg2019online, rosenberg2019bandit} on online MDPs with adversarially chosen reward functions, which mostly focuses on the tabular setting. 
\begin{itemize}
\item Assuming the transition dynamics are known and the full information of the reward functions is available, the work of \cite{even2009online} establishes a $\sqrt{\tau^2 T\cdot\log |\cA|}$-regret, where $\cA$ is the action space, $|\cA|$ is its cardinality, and $\tau$ upper bounds the mixing time of the MDP. See also the work of \cite{yu2009markov}, which establishes a $T^{2/3}$-regret in a similar setting. 
\item Assuming the transition dynamics are known but only the bandit feedback of the received rewards is available, the work of \cite{neu2010bandit, neu2010online, zimin2013online} establishes an $H^2\sqrt{|\cA|T}/\beta$-regret \citep{neu2010online}, a $T^{2/3}$-regret \citep{neu2010bandit}, and a $\sqrt{H  |\cS| |\cA| T}$-regret \citep{zimin2013online}, respectively, all up to logarithmic factors. Here $\cS$ is the state space and $|\cS|$ is its cardinality. In particular, it is assumed by \cite{neu2010online} that, with probability at least $\beta$, any state is reachable under any policy. 
\item Assuming the full information of the reward functions is available but the transition dynamics are unknown, the work of \cite{neu2012adversarial, rosenberg2019online} establishes an $H|\cS||\cA| \sqrt{T}$-regret \citep{neu2012adversarial} and an $H  |\cS| \sqrt{|\cA| T}$-regret \citep{rosenberg2019online}, respectively, both up to logarithmic factors. 
\item Assuming the transition dynamics are unknown and only the bandit feedback of the received rewards is available, the recent work of \cite{rosenberg2019bandit} establishes an $H|\cS|\sqrt{|\cA|T}/\beta$-regret up to logarithmic factors. In particular, it is assumed by \cite{rosenberg2019bandit} that, with probability at least $\beta$, any state is reachable under any policy. Without such an assumption, an $H^{3/2} |\cS| |\cA|^{1/4} T^{3/4}$-regret is established. 
\end{itemize}
In the latter two settings with unknown transition dynamics, all the existing algorithms \citep{neu2012adversarial, rosenberg2019online, rosenberg2019bandit} follow the gradient direction with respect to the visitation measure, and thus, differ from most practical policy optimization algorithms. In comparison, OPPO is not restricted to the tabular setting and indeed follows the gradient direction with respect to the policy. OPPO is simply an optimistic variant of NPG, TRPO, and PPO, which makes it also a practical policy optimization algorithm.
{\red In particular, when specialized to the tabular setting, our setting corresponds to the third setting with $d = |\cS|^2 |\cA|$, where OPPO attains an $H^{3/2} |\cS|^{2} |\cA|  \sqrt{T}$-regret up to logarithmic factors. 
}

Broadly speaking, our work is related to a vast body of work on value-based reinforcement learning in tabular \citep{jaksch2010near, osband2014generalization, osband2016lower, azar2017minimax, dann2017unifying, strehl2006pac, jin2018q} and linear settings \citep{yang2019sample, yang2019reinforcement, jin2019provably, ayoub2020model, zhou2020provably}, as well as nonlinear settings involving general function approximators \citep{wen2017efficient, jiang2017contextual, du2019provably, dong2019sqrt}. In particular, our setting is the same as the linear setting studied by \cite{ayoub2020model, zhou2020provably}, which generalizes the one proposed by \cite{yang2019reinforcement}. We remark that our setting differs from the linear setting studied by \cite{yang2019sample, jin2019provably}. It can be shown that the two settings are incomparable in the sense that one does not imply the other \citep{zhou2020provably}. Also, our setting is related to the low-Bellman-rank setting studied by \cite{jiang2017contextual, dong2019sqrt}. In comparison, we focus on policy-based reinforcement learning, which is significantly less studied in theory. In particular, compared with the work of \cite{yang2019sample, yang2019reinforcement, jin2019provably, ayoub2020model, zhou2020provably}, which focuses on value-based reinforcement learning, OPPO attains the same $\sqrt{T}$-regret even in the presence of adversarially chosen reward functions. Compared with optimism-led iterative value-function elimination (OLIVE) \citep{jiang2017contextual, dong2019sqrt}, which handles the more general low-Bellman-rank setting but is only sample-efficient, OPPO simultaneously attains computational efficiency and sample efficiency in the linear setting. Despite the differences between policy-based and value-based reinforcement learning, our work shows that the general principle of ``optimism in the face of uncertainty'' \citep{auer2002finite, bubeck2012regret} can be carried over from existing algorithms based on value iteration, e.g., optimistic LSVI, into policy optimization algorithms, e.g., NPG, TRPO, and PPO, to make them sample-efficient, which further leads to a new general principle of ``conservative optimism in the face of uncertainty and adversary'' that additionally allows adversarially chosen reward functions.


 

\subsection{Notation}
We denote by $\|\cdot\|_2$ the $\ell_2$-norm of a vector or the spectral norm of a matrix. We denote by $\Delta(\cA)$ the set of probability distributions on a set $\cA$ and correspondingly define
\$
\Delta(\cA\,|\,\cS, H) &= \bigl\{ \{\pi_h(\cdot\,|\,\cdot)\}_{h=1}^H: \pi_h(\cdot\,|\,x)\in\Delta(\cA) ~\text{for any $x\in\cS$ and $h\in[H]$} \bigr\}
\$
for any set $\cS$ and $H\in\mathbb{Z}_{+}$. For $p_1,p_2\in\Delta(\cA)$, we denote by $\dkl(p_1\,\|\,p_2)$ the KL-divergence,
\$
\dkl(p_1\,\|\,p_2) = \sum_{a\in\cA} p_1(a)\log\frac{p_1(a)}{p_2(a)}. 
\$
Throughout this paper, we denote by $C, C', C'', \ldots$ absolute constants whose values can vary from line by line. 

\section{Preliminaries}
\subsection{MDPs with Adversarial Rewards}
In this paper, we consider an episodic MDP $(\cS,\cA, H, \cP, r)$, where $\cS$ and $\cA$ are the state and action spaces, respectively, $H$ is the length of each episode, $\cP_h(\cdot\,|\,\cdot,\cdot)$ is the transition kernel from a state-action pair to the next state at the $h$-th step of each episode, and $r^k_h:\cS\times\cA\rightarrow [0,1]$ is the reward function at the $h$-th step of the $k$-th episode. We assume that the reward function is deterministic, which is without loss of generality, as our subsequent regret analysis readily generalizes to the setting where the reward function is stochastic. 


At the beginning of the $k$-th episode, the agent determines a policy $\pi^k = \{\pi^k_h\}_{h=1}^H\in\Delta(\cA\,|\,\cS, H)$. We assume that the initial state $x^k_1$ is fixed to $x_1\in\cS$ across all the episodes, which is without loss of generality, as our subsequent regret analysis readily generalizes to the setting where $x^k_1$ is sampled from a fixed distribution across all the episodes. Then the agent iteratively interacts with the environment as follows. At the $h$-th step, the agent receives a state $x^k_h$ and takes an action following $a^k_h\sim\pi^k_h(\cdot\,|\,x^k_h)$. Subsequently, the agent receives a reward $r^k_h(x^k_h,a^k_h)$ and the next state following $x^k_{h+1}\sim\cP_h(\cdot\,|\,x^k_{h},a^k_{h})$. The $k$-th episode ends after the agent receives the last reward $r^k_H(x^k_H,a^k_H)$.



We allow the reward function $r^k=\{r^k_h\}_{h=1}^H$ to be adversarially chosen by the environment at the beginning of the $k$-th episode, which can depend on the  $(k-1)$ historical trajectories. The reward function $r^k_h$ is revealed to the agent after it takes the action $a^k_h$ at the state $x^k_h$, which together determine the received reward $r^k_h(x^k_h,a^k_h)$. We define the regret in terms of competing with the globally optimal policy in hindsight \citep{cesa2006prediction, bubeck2012regret} as
\#\label{regret}
\text{Regret}(T) = \max_{\pi \in \Delta(\cA\,|\,\cS,H)} \sum_{k=1}^K \bigl(V^{\pi,k}_1(x^k_1) - V^{\pi^k,k}_1(x^k_1)\bigr),
\#
where $T=HK$ is the total number of steps taken by the agent in all the $K$ episodes. For any policy $\pi = \{\pi_h\}_{h=1}^H\in\Delta(\cA\,|\,\cS, H)$, the value function $V^{\pi,k}_h: \cS \rightarrow \RR$ associated with the reward function $r^k=\{r^k_h\}_{h=1}^H$ is defined by
\#\label{eq:vfunc}
V^{\pi,k}_h(x) =\EE_{\pi} \Bigl[ \sum_{i=h}^H r^k_i(x_i,a_i) \,\Big|\, x_h=x \Bigr].
\#
Here we denote by $\EE_{\pi}[\cdot]$ the expectation with respect to the randomness of the state-action sequence $\{(x_h,a_h)\}_{h=1}^H$, where the action $a_h$ follows the policy $\pi_h(\cdot \,|\,x_h)$ at the state $x_h$ and the next state $x_{h+1}$ follows the transition dynamics $\cP_h(\cdot\,|\,x_h,a_h)$.
 Correspondingly, we define the action-value function (also known as the Q-function) $Q^{\pi,k}_h: \cS \times \cA \rightarrow \RR$ as 
\#\label{eq:qfunc}
Q^{\pi,k}_h(x,a) =\EE_{\pi} \Bigl[ \sum_{i=h}^H r^k_i(x_i,a_i) \,\Big|\, x_h=x,\,a_h=a  \Bigr].
\#
By the definitions in \eqref{eq:vfunc} and \eqref{eq:qfunc}, we have the following Bellman equation, 
\#\label{eq:w1204}
V^{\pi,k}_h = \la Q^{\pi,k}_h, \pi_h \ra_\cA,\quad 
Q^{\pi,k}_h = r^k_h + \mathbb{P}_h V^{\pi,k}_{h+1}.
\#
Here $\la\cdot,\cdot\ra_{\cA}$ denotes the inner product over $\cA$, where the subscript is omitted subsequently if it is clear from the context. Also, $\mathbb{P}_h$ is the operator form of the transition kernel $\cP_h(\cdot\,|\,\cdot,\cdot)$, which is defined by
\#\label{eq:w256c}
(\mathbb{P}_h f)(x,a) = \EE[ f(x') \,|\, x'\sim \cP_h(\cdot\,|\,x,a)  ]
\#
for any function $f:\cS\rightarrow \RR$. By allowing the reward function to be adversarially chosen in each episode, our setting generalizes the stationary setting commonly adopted by the existing work on value-based reinforcement learning \citep{jaksch2010near, osband2014generalization, osband2016lower, azar2017minimax, dann2017unifying, strehl2006pac, jin2018q, jin2019provably, yang2019sample, yang2019reinforcement}, where the reward function is fixed across all the episodes.

\subsection{Linear Function Approximations}
We consider the linear setting where the transition dynamics are linear in a feature map, which is formalized in the following assumption. 

\begin{assumption}[Linear MDP \citep{ayoub2020model, zhou2020provably}]\label{linearmdp} We assume that the MDP $(\cS,\cA, H, \cP, r)$ is a linear MDP with the known feature map $\psi: \cS\times\cA\times\cS \rightarrow\RR^d$, that is, for any $h\in[H]$, there exists $\theta_h\in\RR^d$ with $\|\theta_h\|_2\le \sqrt{d}$ such that
\$
\cP_h(x'\,|\,x,a)=\psi(x,a,x')^\top\theta_h
\$
for any $(x,a,x')\in\cS\times\cA\times\cS$. Also, we assume that
\$
\Bigl\| \int_\cS \psi(x,a,x')\cdot V(x')\bd x' \Bigr\|_2 \le \sqrt{d}H
\$
 for any $(x,a)\in\cS\times\cA$ and $V:\cS\rightarrow[0,H]$.
\end{assumption}

See \cite{ayoub2020model, zhou2020provably} for various examples of linear MDPs, including the one proposed by \cite{yang2019reinforcement}.  In particular, a tabular MDP corresponds to the linear MDP with $d=|\cS|^2|\cA|$ and the feature vector $\psi(x, a,x')$ being the canonical basis $e_{(x, a,x')}$ of $\RR^{|\cS|^2|\cA|}$. See also \cite{du2019good, van2019comments, lattimore2019learning} for a related discussion on the necessity of the linear representation.

{\red We remark that \cite{yang2019sample, jin2019provably}
study another variant of linear MDPs, where the transition kernel can be written as $\cP_{h} (x'\given x, a) = \varphi(x,a)^\top \mu_h(x')$ for any $h \in [H]$ and $(x,a,x') \in \cS\times \cA \times \cS$.  Here $\varphi \colon \cS \times \cA \rightarrow \RR^d$ is a known feature map and $\mu_h\colon \cS\rightarrow \RR^d$ is an unknown function on $\cS$ for any $h \in [H]$. 
Although the variant of linear MDPs defined in Assumption \ref{linearmdp} and the one studied by \cite{yang2019sample, jin2019provably} both cover the tabular setting and the one proposed by \cite{yang2019reinforcement} as special cases, they are two different definitions of linear MDPs as their feature maps $\psi (\cdot , \cdot, \cdot)$ and $\varphi(\cdot, \cdot)$ are defined on different domains. It can be shown that the two variants are incomparable in the sense that one does not imply the other \citep{zhou2020provably}.

} 
\section{Algorithm and Theory}\label{results}
\subsection{Optimistic PPO (OPPO)}\label{secalgo}
We present Optimistic PPO (OPPO) in Algorithm~\ref{ppoalgo}, which involves a policy improvement step and a policy evaluation step.

\vspace{4pt}
\noindent
{\bf Policy Improvement Step.} In the $k$-th episode, OPPO updates $\pi^k$ based on $\pi^{k-1}$ (Lines 4-9 of Algorithm~\ref{ppoalgo}). In detail, we define the following linear function of the policy $\pi \in\Delta(\cA\,|\,\cS, H)$,
\#\label{linearappr}
&L_{k-1}(\pi) =  V^{\pi^{k-1},k-1}_1(x^k_1) +  \EE_{\pi^{k-1}} \Bigl[\sum_{h=1}^H \la Q^{\pi^{k-1},k-1}_h(x_h, \cdot), 
  \pi_h(\cdot\,|\,x_h) -  \pi^{k-1}_h(\cdot\,|\,x_h)   \ra\,\Big|\, x_1 = x^k_1 \Bigr], 
  \#
which is a local linear approximation of $V_1^{\pi,  k-1}(x^k_1)$ at $\pi^{k-1}$ \citep{schulman2015trust, schulman2017proximal}. In particular, we have that $L_{k-1}(\pi^{k-1}) = V^{\pi^{k-1}, k-1}_1(x^k_1)$. The policy improvement step is defined~by 
\#\label{1014522}
&\pi^k \leftarrow \argmax_{\pi\in\Delta(\cA\,|\,\cS,H)} L_{k-1}(\pi)  - \alpha^{-1} \cdot  \EE_{\pi^{k-1}}[ \tilde{D}_{\text{KL}}(\pi\,\|\,\pi^{k-1})\,|\, x_1 = x^k_1], \\ 
& \text{where~~}\tilde{D}_{\text{KL}}(\pi\,\|\,\pi^{k-1})=\sum_{h=1}^H \dkl\bigl( \pi_h(\cdot\,|\,x_h) \,\big\|\, \pi^{k-1}_h(\cdot\,|\,x_h)\bigr). \notag
\#
 Here the KL-divergence regularizes $\pi$ to be close to $\pi^{k-1}$ so that $L_{k-1}(\pi)$ well approximates $V^{\pi,k-1}_1(x^k_1)$, which further ensures that the updated policy $\pi^{k}$ improves the expected total reward (associated with the reward function $r^{k-1}$) upon $\pi^{k-1}$. Also, $\alpha > 0$ is the stepsize, which is specified in Theorem \ref{10061231}. By executing the updated policy $\pi^k$, the agent  receives the state-action sequence $\{(x^k_h, a^k_h)\}_{h=1}^H$ and observes the reward function $r^k$, which together determine the received rewards $\{r^k_h(x^k_h, a^k_h)\}_{h=1}^H$.

The policy improvement step defined in \eqref{1014522} corresponds to one iteration of NPG \citep{kakade2002natural}, TRPO \citep{schulman2015trust}, and PPO \citep{schulman2017proximal}. In particular, PPO solves the same KL-regularized policy optimization subproblem as in \eqref{1014522} at each iteration, while TRPO solves  an equivalent KL-constrained subproblem. In the special case where the reward function $r^{k-1}_h$ is linear in the feature map $\phi^{k-1}_h$ defined subsequently, which implies that the Q-function $Q^{\pi^{k-1},k-1}_h$ is also linear in $\phi^{k-1}_h$, the updated policy $\pi^k$ can be equivalently obtained by one iteration of NPG when the policy is parameterized by an energy-based distribution \citep{agarwal2019optimality, wang2019neural}. Such a policy improvement step can also be cast as one iteration of infinite-dimensional mirror descent \citep{nemirovsky1983problem} or dual averaging \citep{xiao2010dual}, where the Q-function plays the role of  the gradient \citep{liu2019neural, wang2019neural}.

\begin{algorithm}[t]
\caption{Optimistic PPO (OPPO)}
\begin{algorithmic}[1]
\STATE Initialize $\{\pi^0_h(\cdot \,|\, \cdot)\}_{h=1}^H$ as uniform distributions on $\cA$ and $\{Q^0_h(\cdot, \cdot)\}_{h=1}^H$ as zero functions.\label{line:winit}
\STATE \textbf{For} episode $k=1,2,\ldots, K$ \textbf{do}
\STATE \hspace{0.15in} Receive the initial state $x_1^k$.
\STATE \hspace{0.15in} \textbf{For} step {$h=1, 2, \ldots, H$} \textbf{do} \label{line:pis-start}
\STATE \hspace{0.30in} Update the policy by
\STATE \hspace{0.40in} $\pi^k_h(\cdot\,|\,\cdot) \propto \pi^{k-1}_h(\cdot\,|\,\cdot) \cdot \exp\{\alpha\cdot Q^{k-1}_h(\cdot,\cdot)\}$. \label{policyupdateline}
\STATE \hspace{0.30in} Take the action following $a^k_{h}\sim\pi^k_h(\cdot\,|\,x_h^k)$.
\STATE \hspace{0.30in} Observe the reward function $r^k_{h}(\cdot,\cdot)$.
\STATE \hspace{0.30in}  Receive the next state $x^k_{h+1}$. \label{line:pis-end}
\STATE \hspace{0.15in} Initialize $V^k_{H+1}(\cdot)$ as a zero function.
\STATE \hspace{0.15in} \textbf{For} step {$h=H, H-1,\ldots, 1$} \textbf{do}\label{line:pes-start}
\STATE \hspace{0.30in} $\Lambda^k_h\leftarrow \sum_{\tau=1}^{k-1} \phi^\tau_h(x^\tau_h, a^\tau_h)\phi^\tau_h(x^\tau_h, a^\tau_h)^\top + \lambda\cdot I$.
\STATE \hspace{0.30in} $w^k_h\leftarrow(\Lambda^k_h)^{-1} \sum_{\tau=1}^{k-1}\phi^\tau_h(x^\tau_h, a^\tau_h) \cdot V^\tau_{h+1}(x^\tau_{h+1})$.
\STATE \hspace{0.30in} $\phi^k_h(\cdot, \cdot) \leftarrow \int_\cS \psi(\cdot,\cdot,x') \cdot V^k_{h+1}(x')\bd x'$.
\STATE \hspace{0.30in} $\Gamma^k_h(\cdot,\cdot)\leftarrow \beta\cdot [\phi^k_h(\cdot,\cdot)^\top(\Lambda^k_h)^{-1}\phi^k_h(\cdot,\cdot)]^{1/2}$.
\STATE \hspace{0.30in} $\overbar{Q}^k_h(\cdot,\cdot)\leftarrow r^k_h(\cdot,\cdot) + \phi^k_h(\cdot,\cdot)^\top w^k_h + \Gamma^k_h(\cdot,\cdot)$.
\STATE \hspace{0.30in} $Q^k_h(\cdot, \cdot)\leftarrow \min\{ \overbar{Q}^k_h(\cdot,\cdot), H -h+1\}^+$.
\STATE \hspace{0.30in} $V^k_h(\cdot)\leftarrow \la Q^k_h(\cdot,\cdot), \pi^k_h(\cdot\,|\,\cdot) \ra_\cA$.
\end{algorithmic}\label{ppoalgo}
\end{algorithm}

The updated policy $\pi^k$ obtained in \eqref{1014522} takes the following closed form,
\#\label{eq:w1045}
\pi^{k}_h(\cdot\,|\,x) \propto \pi^{k-1}_h(\cdot\,|\,x) \cdot \exp\bigl(\alpha\cdot Q^{\pi^{k-1}, k-1}_h(x, \cdot)\bigr)
\#
for any $h \in [H]$ and $x\in\cS$. However, the Q-function $Q^{\pi^{k-1}, k-1}_h$ remains to be estimated through the subsequent policy evaluation step. We denote by $Q^{k-1}_h$ the estimated Q-function, which replaces the Q-function $Q^{\pi^{k-1}, k-1}_h$ in \eqref{linearappr}-\eqref{eq:w1045} and is correspondingly used in Line 6 of Algorithm~\ref{ppoalgo}.

\vspace{4pt}
\noindent
{\bf Policy Evaluation Step.} At the end of the $k$-th episode, OPPO evaluates the policy $\pi^{k}$ based on the $(k-1)$ historical trajectories (Lines 11-18 of Algorithm~\ref{ppoalgo}). In detail, for any $h\in [H]$, we define the empirical mean-squared Bellman error (MSBE) \citep{sutton2018reinforcement} as
\#\label{eq:w230a}
&M^k_h(w) = \sum_{\tau=1}^{k-1} \bigl(V^{\tau}_{h+1}(x^{\tau}_{h+1}) - \phi^\tau_h(x^\tau_h, a^\tau_h)^\top w\bigr)^2,\\
& \text{where~~} 
\phi^\tau_h(\cdot,\cdot)=\int_\cS \psi(\cdot,\cdot,x') \cdot V^\tau_{h+1}(x')\bd x',  \notag\\
&\quad\qquad V^\tau_{h+1}(\cdot) = \la Q_{h+1}^\tau (\cdot , \cdot ), \pi_{h+1}^\tau (\cdot \given \cdot )\ra_\cA,  \notag
\#
while we initialize $V^\tau_{H+1}$ as a zero function on $\cS$. The policy evaluation step is defined by iteratively updating the estimated Q-function $Q^k = \{Q^k_h\}_{h=1}^H$ associated with the reward function $r^{k} = \{r^k_h\}_{h=1}^H$ by 
\#\label{eq:w1150}
 & w^k_h  \leftarrow \argmin_{w \in \RR^d} M^k_h(w) + \lambda \cdot \|w\|_2^2,  \notag\\
&  Q^{k}_h(\cdot,\cdot)  \leftarrow  \min\{r^k_h(\cdot,\cdot) + \phi^k_h(\cdot,\cdot)^\top w^k_h + \Gamma^k_h(\cdot,\cdot), H-h+1\}^+  
\#
in the order of $h=H, H-1, \ldots, 1$. Here $\lambda > 0$ is the regularization parameter, which is specified in Theorem \ref{10061231}. Also, $\Gamma^k_h: \cS\times \cA\rightarrow \RR^+$ is a bonus function, which quantifies the uncertainty in estimating the Q-function $Q^{\pi^{k}, k}_h$ based on only finite historical data. In particular, the weight vector $w^k_h$ obtained in \eqref{eq:w1150} and the bonus function $\Gamma^k_h$ take the following closed forms, 
\#\label{1015509}
&w^k_h = (\Lambda^k_h)^{-1} \Bigl( \sum_{\tau=1}^{k-1} \phi^\tau_h(x_h^\tau,a_h^\tau)\cdot V_{h+1}^\tau(x_{h+1}^\tau) \Bigr),\notag\\
& \Gamma^k_h(\cdot, \cdot)=\beta\cdot \bigl(\phi^k_h(\cdot,\cdot)^\top(\Lambda^k_h)^{-1} \phi^k_h(\cdot,\cdot)\bigr)^{1/2},
\\
&\text{where~~}\Lambda^k_h=\sum_{\tau=1}^{k-1} \phi^\tau_h(x_h^\tau,a_h^\tau)\phi^\tau_h(x_h^\tau,a_h^\tau)^\top + \lambda\cdot I.\notag
\#
Here $\beta > 0$ scales with $d$, $H$, and $K$, which is specified in Theorem \ref{10061231}. 

The policy evaluation step defined in \eqref{eq:w1150} corresponds to one iteration of least-squares temporal difference (LSTD) \citep{bradtke1996linear, boyan2002technical}. In particular, as we have 
\$
\EE[ V^{\tau}_{h+1}(x') \,|\, x'\sim \cP_h(\cdot\,|\,x,a)  ] = (\mathbb{P}_h V^{\tau}_{h+1})(x,a)  
\$
for any $\tau \in [k-1]$ and $(x,a)\in\cS\times\cA$ in the empirical MSBE defined in \eqref{eq:w230a}, ${\phi^{k\top}_h} w^k_h$ in \eqref{eq:w1150} is an estimator of $\mathbb{P}_h V^k_{h+1}$ in the Bellman equation defined in \eqref{eq:w1204} (with $V^{\pi^k, k}_{h+1}$ replaced by $V^k_{h+1}$). Meanwhile, we construct the bonus function $\Gamma^k_h$ according to \eqref{1015509} so that ${\phi^{k\top}_h} w^k_h + \Gamma^k_h$ is an upper confidence bound (UCB), that is, it holds that 
\$
\phi^k_h(\cdot,\cdot)^\top w^k_h + \Gamma^k_h(\cdot,\cdot) \geq (\mathbb{P}_h V^k_{h+1})(\cdot,\cdot)
\$ with high probability, which is subsequently characterized in Lemma \ref{10061229}. Here the inequality holds uniformly for any $(h,k)\in [H]\times [K]$ and $(x, a) \in \cS \times \cA$. As the fact that $r^k_h \in [0,1]$ for any $h \in [H]$ implies that $Q^{\pi^k,k}_{h} \in [0, H-h+1]$, we truncate $Q^k_h$ to the range $[0, H-h+1]$ in \eqref{eq:w1150}, which is correspondingly used in Line 17 of Algorithm~\ref{ppoalgo}.

\subsection{Regret Analysis}\label{secmd}
We establish an upper bound of the regret of OPPO (Algorithm~\ref{ppoalgo}) in the following theorem. Recall that the regret is defined in \eqref{regret} and $T = HK$ is the total number of steps taken by the agent, where $H$ is the length of each episode and $K$ is the total number of episodes. Also, $|\cA|$ is the cardinality of $\cA$ and $d$ is the dimension of the feature map $\psi$.

\begin{theorem}[Total Regret]\label{10061231} 
Let $\alpha=\sqrt{2\log{|\cA|}/(HT)}$ in \eqref{1014522} and Line 6 of Algorithm~\ref{ppoalgo}, $\lambda=1$ in \eqref{eq:w1150} and Line 12 of Algorithm~\ref{ppoalgo}, and $\beta=C\sqrt{dH^2\cdot\lok}$ in \eqref{1015509} and Line 15 of Algorithm~\ref{ppoalgo}, where $C>1$ is an absolute constant and $\zeta\in (0,1]$. Under Assumption \ref{linearmdp} and the assumption that $\log |\cA| = O(d^2\cdot [\lok]^2)$, the regret of OPPO satisfies 
\$
\text{Regret}(T) \le C'\reg\cdot \lok
\$
with probability at least $1-\zeta$, where $C' > 0$ is an absolute constant. 
\end{theorem}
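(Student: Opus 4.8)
The plan is to fix an arbitrary comparator policy $\pi\in\Delta(\cA\given\cS,H)$, bound the per-episode gap $V^{\pi,k}_1(x^k_1)-V^{\pi^k,k}_1(x^k_1)$ by a telescoping policy-optimization term plus two prediction-error terms, and then sum over $k=1,\dots,K$ and maximize over $\pi$. To this end introduce the one-step model prediction error
\[
\iota^k_h(x,a)=r^k_h(x,a)+(\mathbb{P}_hV^k_{h+1})(x,a)-Q^k_h(x,a),
\]
which records how far the estimated Q-function deviates from the Bellman equation \eqref{eq:w1204}. Unrolling a performance-difference recursion, using $Q^k_h=r^k_h+\mathbb{P}_hV^k_{h+1}-\iota^k_h$ and $V^k_h=\la Q^k_h,\pi^k_h\ra$, yields the decomposition
\[
\begin{aligned}
V^{\pi,k}_1(x^k_1)-V^{\pi^k,k}_1(x^k_1)&=\underbrace{\sum_{h=1}^H\EE_\pi\bigl[\la Q^k_h(x_h,\cdot),\pi_h(\cdot\given x_h)-\pi^k_h(\cdot\given x_h)\ra\bigr]}_{\text{(i)}}\\
&\quad+\underbrace{\sum_{h=1}^H\EE_\pi\bigl[\iota^k_h(x_h,a_h)\bigr]}_{\text{(ii)}}-\underbrace{\sum_{h=1}^H\EE_{\pi^k}\bigl[\iota^k_h(x_h,a_h)\bigr]}_{\text{(iii)}},
\end{aligned}
\]
where every expectation is conditioned on $x_1=x^k_1$.

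For term (i) I would exploit the exponential-weights structure of the update in Line~6 of Algorithm~\ref{ppoalgo}: since $\pi^{k+1}_h(\cdot\given x)\propto\pi^k_h(\cdot\given x)\exp(\alpha\cdot Q^k_h(x,\cdot))$ with $0\le Q^k_h\le H$, this is exactly online mirror descent (multiplicative weights) with gains $Q^k_h$, so the standard analysis gives, pointwise in the state and for each $h$,
\[
\sum_{k=1}^K\la Q^k_h(x_h,\cdot),\pi_h(\cdot\given x_h)-\pi^k_h(\cdot\given x_h)\ra\le\frac{\log|\cA|}{\alpha}+\frac{\alpha H^2K}{2}.
\]
Taking $\EE_\pi$, summing over $h\in[H]$, and substituting $\alpha=\sqrt{2\log|\cA|/(HT)}$ bounds $\sum_k\text{(i)}$ by $O(\sqrt{H^3T\log|\cA|})$; under the hypothesis $\log|\cA|=O(d^2[\lok]^2)$ this is $O(\reg\cdot\lok)$, matching the target rate.

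The two prediction-error terms are where optimism enters, through the uncertainty bound stated as Lemma~\ref{10061229}: with $\beta=C\sqrt{dH^2\cdot\lok}$, on an event of probability at least $1-\zeta$ it holds uniformly over $(k,h)$ and $(x,a)$ that $-2\Gamma^k_h(x,a)\le\iota^k_h(x,a)\le 0$. The upper bound $\iota^k_h\le 0$ forces term (ii) $\le0$, so the comparator contributes nothing. For term (iii) the lower bound gives $-\EE_{\pi^k}[\iota^k_h]\le 2\EE_{\pi^k}[\Gamma^k_h]$; I would split $\EE_{\pi^k}[\iota^k_h\given x_1=x^k_1]=\iota^k_h(x^k_h,a^k_h)+D^k_h$, where $\{D^k_h\}$ is a martingale-difference sequence along the executed trajectory, bound $\sum_{k,h}D^k_h=O(\sqrt{H^2T\cdot\log(1/\zeta)})$ by Azuma--Hoeffding (each $\iota^k_h$ is $O(H)$), and bound the realized bonus by the elliptical-potential (log-determinant) argument. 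Since $\|\phi^\tau_h(x,a)\|_2\le\sqrt{d}H$ by Assumption~\ref{linearmdp}, for each $h$ one has $\sum_{k=1}^K[(\phi^k_h)^\top(\Lambda^k_h)^{-1}\phi^k_h]^{1/2}\le\sqrt{2dK\log(1+KH^2/\lambda)}$ by Cauchy--Schwarz and the determinant bound; multiplying by $\beta$ and summing over $h$ gives $\sum_{k,h}\Gamma^k_h(x^k_h,a^k_h)=O(\reg\cdot\lok)$, the dominant contribution.

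Combining the four estimates and a union bound over the failure events yields $\text{Regret}(T)\le C'\reg\cdot\lok$ with probability at least $1-\zeta$. The main obstacle is the optimism property in Lemma~\ref{10061229}, because the feature $\phi^k_h=\int_\cS\psi(\cdot,\cdot,x')V^k_{h+1}(x')\bd x'$ depends on the data through $V^k_{h+1}$, so one cannot apply a self-normalized concentration inequality to a fixed target. I would instead (a) split $(\phi^k_h)^\top w^k_h-\mathbb{P}_hV^k_{h+1}$ into a ridge-regularization bias, controlled via $\|\theta_h\|_2\le\sqrt{d}$, and a self-normalized martingale term, and (b) bound the latter uniformly over the class of admissible value functions $V^k_{h+1}$---which are truncated, bonus-augmented, and policy-weighted---by a covering-number argument, so the deviation is absorbed into $\Gamma^k_h$ exactly when $\beta$ is of order $\sqrt{dH^2\lok}$. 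Making the covering of this data-dependent value-function class interact correctly with the self-normalized bound is the delicate step that fixes both the scale of $\beta$ and the logarithmic factors in the final regret.
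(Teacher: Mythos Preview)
Your regret decomposition and the treatment of the three pieces coincide with the paper's argument (Lemma~\ref{1005415} and Appendix~\ref{1008755}) essentially line for line; writing $-\sum_h\EE_{\pi^k}[\iota^k_h]$ first and then extracting a martingale along the realized trajectory is algebraically the same as the paper's route, which builds the martingale $\cM_{K,H,2}$ directly into the decomposition via the differences $D_{k,h,1},D_{k,h,2}$.

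Where you diverge is in the plan for Lemma~\ref{10061229}, and there the covering strategy is both unnecessary and likely unworkable. You worry that the data dependence of $\phi^k_h$ on $V^k_{h+1}$ blocks a direct self-normalized bound, and propose to cover the class of admissible $V^k_{h+1}$. But in OPPO, $V^k_{h+1}=\la Q^k_{h+1},\pi^k_{h+1}\ra_\cA$ and $\pi^k_{h+1}\propto\prod_{j<k}\exp(\alpha Q^j_{h+1})$, so each $V^k_{h+1}$ is effectively parameterized by all past $(w^j_{h+1},\Lambda^j_{h+1})$; the log-covering number of this class grows with $k$, and the resulting $\beta$ would no longer be $O(\sqrt{dH^2\cdot\lok})$. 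The paper sidesteps this entirely by observing that, under Assumption~\ref{linearmdp}, the regression in \eqref{eq:w230a}--\eqref{1015509} uses at each past episode $\tau$ the feature $\phi^\tau_h(x^\tau_h,a^\tau_h)$ built from $V^\tau_{h+1}$, and $V^\tau_{h+1}$ is computed from the first $\tau-1$ trajectories together with $r^\tau$, hence is $\cF_{\tau,h,1}$-measurable \emph{before} $x^\tau_{h+1}$ is drawn. Thus $\eta_{\tau,h}=V^\tau_{h+1}(x^\tau_{h+1})-(\mathbb{P}_hV^\tau_{h+1})(x^\tau_h,a^\tau_h)$ is a genuine martingale difference with an adapted feature sequence, and the self-normalized bound of \cite{abbasi2011improved} applies directly (Lemma~\ref{eventlm}) with no uniformity over value functions required; the test feature $\phi^k_h(x,a)$ enters only through the Cauchy--Schwarz factor $\|\phi^k_h(x,a)\|_{(\Lambda^k_h)^{-1}}$, which is exactly $\Gamma^k_h/\beta$. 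This adaptedness is precisely what the model in Assumption~\ref{linearmdp} buys over the linear MDP of \cite{jin2019provably}, where the regression target involves the \emph{current}-episode $V^k_{h+1}$ and a covering argument is genuinely needed (at the cost of an extra $\sqrt{d}$).
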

\begin{proof}
See Section \ref{sketch} for a proof sketch and Appendix \ref{1008755} for a detailed proof.
\end{proof}

Theorem \ref{10061231} proves that OPPO attains a $\reg$-regret up to logarithmic factors, where the dependency on the total number of steps $T$ is optimal. 
In the stationary setting where the reward function and initial state are fixed across all the episodes, such a regret translates to a $d^2 H^4/ \varepsilon^2$-sample complexity (up to logarithmic factors) following the argument of \cite{jin2018q} (Section 3.1). Here $\varepsilon > 0$ measures the suboptimality of the obtained policy $\pi^{k}$ in the following sense,
\$
\max_{\pi \in \Delta(\cA\,|\,\cS,H)} V^{\pi}_1(x_1) - V^{\pi^{k}}_1(x_1) \leq \varepsilon,
\$
where $k$ is sampled from $[K]$ uniformly at random. Here we denote the value function by $V^{\pi}_1 = V^{\pi, k}_1$ and the initial state by $x_1 = x_1^k$ for any $k\in [K]$, as the reward function and initial state are fixed across all the episodes. Moreover, compared with the work of \cite{yang2019sample, yang2019reinforcement, jin2019provably, ayoub2020model, zhou2020provably}, OPPO additionally allows adversarially chosen reward functions without exacerbating the regret, which leads to a notion of robustness.  
{\red Also, as a tabular MDP satisfies Assumption \ref{linearmdp} with $d = | \cS|^2 |\cA|$ and $\psi$ being the canonical basis of $\RR^d$, 
Theorem  \ref{10061231} yields an $| \cS|^2 |\cA| \sqrt{H^3 T} $-regret in the tabular setting.}
Our subsequent discussion intuitively explains how OPPO achieves such a notion of robustness while attaining the $\reg$-regret (up to logarithmic factors).


\vspace{4pt}
\noindent
{\bf Discussion of Mechanisms.} In the sequel, we consider the ideal setting where the transition dynamics are known, which, by the Bellman equation defined in \eqref{eq:w1204}, allows us to access the Q-function $Q_h^{\pi, k}$ for any policy $\pi$ and $(h,k)\in [H]\times [K]$ once given the reward function $r^k$. The following lemma connects the difference between two policies to the difference between their expected total rewards through the Q-function.

\begin{lemma}[Performance Difference]\label{pdlm}
 For any policies $\pi, \pi'\in\Delta(\cA\,|\,\cS,H)$ and $k\in[K]$, it holds that
\#\label{pd}
&V^{\pi',k}_1(x^k_1) - V^{\pi,k}_1(x^k_1) = \EE_{\pi'}\Bigl[\sum_{h=1}^H \la Q^{\pi,k}_h(x_h,\cdot),  \pi'_h(\cdot\,|\,x_h)  - \pi_h(\cdot\,|\,x_h)  \ra \,\Big|\, x_1 = x^k_1\Bigr].  
\#
\end{lemma}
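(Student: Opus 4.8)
The plan is to prove this by the standard telescoping (performance difference) argument, using only the two Bellman identities in \eqref{eq:w1204} together with the fact that the transition kernel $\mathbb{P}_h$ does not depend on the policy. The trajectory is generated under $\pi'$ throughout, so all expectations are $\EE_{\pi'}[\,\cdot \given x_1 = x^k_1]$; the key idea is to evaluate the value function $V^{\pi,k}_h$ of the \emph{other} policy $\pi$ along this $\pi'$-trajectory and telescope it against the rewards collected under $\pi'$.

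First I would rewrite $V^{\pi,k}_1(x^k_1)$ as a telescoping sum along the $\pi'$-trajectory. Setting $V^{\pi,k}_{H+1}\equiv 0$ and using that $x_1 = x^k_1$ is deterministic, the collapsing sum gives $V^{\pi,k}_1(x^k_1) = \EE_{\pi'}[\sum_{h=1}^H (V^{\pi,k}_h(x_h) - V^{\pi,k}_{h+1}(x_{h+1})) \given x_1 = x^k_1]$, since it reduces to $V^{\pi,k}_1(x_1) - V^{\pi,k}_{H+1}(x_{H+1})$. Meanwhile $V^{\pi',k}_1(x^k_1) = \EE_{\pi'}[\sum_{h=1}^H r^k_h(x_h,a_h)\given x_1 = x^k_1]$ by the definition \eqref{eq:vfunc}. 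Subtracting, the difference equals $\EE_{\pi'}[\sum_{h=1}^H (r^k_h(x_h,a_h) + V^{\pi,k}_{h+1}(x_{h+1}) - V^{\pi,k}_h(x_h)) \given x_1 = x^k_1]$.

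The next step collapses the per-step term using the two Bellman equations. Conditioning on $(x_h,a_h)$ and taking the expectation over the next state $x_{h+1}\sim \cP_h(\cdot\given x_h,a_h)$ — which is unaffected by the policy — the definition \eqref{eq:w256c} of $\mathbb{P}_h$ gives $\EE_{\pi'}[V^{\pi,k}_{h+1}(x_{h+1})\given x_h,a_h] = (\mathbb{P}_h V^{\pi,k}_{h+1})(x_h,a_h)$, so $r^k_h(x_h,a_h) + (\mathbb{P}_h V^{\pi,k}_{h+1})(x_h,a_h) = Q^{\pi,k}_h(x_h,a_h)$ by the second identity in \eqref{eq:w1204}. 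Then conditioning on $x_h$ and using $a_h \sim \pi'_h(\cdot\given x_h)$ turns $\EE_{\pi'}[Q^{\pi,k}_h(x_h,a_h)\given x_h]$ into $\la Q^{\pi,k}_h(x_h,\cdot), \pi'_h(\cdot\given x_h)\ra$, while the first identity $V^{\pi,k}_h = \la Q^{\pi,k}_h, \pi_h\ra$ rewrites the subtracted term as $\la Q^{\pi,k}_h(x_h,\cdot), \pi_h(\cdot\given x_h)\ra$. Taking the difference of the two inner products yields exactly the summand $\la Q^{\pi,k}_h(x_h,\cdot), \pi'_h(\cdot\given x_h) - \pi_h(\cdot\given x_h)\ra$ in \eqref{pd}, completing the proof.

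The argument is essentially bookkeeping, so the only real care is in the conditioning, which I would flag as the main (if mild) obstacle: the tower-property staging must be respected. The next-state expectation producing $\mathbb{P}_h V^{\pi,k}_{h+1}$ has to be taken conditionally on $(x_h,a_h)$, and only afterward is the action expectation over $a_h\sim\pi'_h$ taken conditionally on $x_h$, so that $V^{\pi,k}_{h+1}$ (the value of $\pi$) is evaluated under the $\pi'$-induced distribution of $x_{h+1}$ while the action at step $h$ is drawn from $\pi'_h$, not $\pi_h$. Keeping this ordering straight is precisely what makes the mismatch between the $\pi'$-trajectory and the $\pi$-value collapse cleanly into the advantage-type inner product.
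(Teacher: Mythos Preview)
Your proof is correct. It is the standard Kakade--Langford telescoping argument: you evaluate $V^{\pi,k}_h$ along the $\pi'$-trajectory, telescope $\sum_h (V^{\pi,k}_h(x_h)-V^{\pi,k}_{h+1}(x_{h+1}))$, and then use the tower property together with the two Bellman identities in \eqref{eq:w1204} to collapse each summand to $\la Q^{\pi,k}_h(x_h,\cdot),\pi'_h-\pi_h\ra$. The conditioning order you flag (first $x_{h+1}$ given $(x_h,a_h)$, then $a_h$ given $x_h$) is handled correctly.

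The paper's proof reaches the same identity by a different but equivalent telescoping. It introduces the Bellman evaluation operators $\mathbb{T}_{h,\pi}$, writes $V^\pi_1=\prod_{h=1}^H\mathbb{T}_{h,\pi}\mathbf{0}$, and telescopes the \emph{operator product} rather than the trajectory: $\prod_h\mathbb{T}_{h,\pi'}-\prod_h\mathbb{T}_{h,\pi}=\sum_h\prod_{i<h}\mathbb{T}_{i,\pi'}(\mathbb{T}_{h,\pi'}-\mathbb{T}_{h,\pi})\prod_{i>h}\mathbb{T}_{i,\pi}$, after which $(\mathbb{T}_{h,\pi'}-\mathbb{T}_{h,\pi})V^\pi_{h+1}=\la Q^\pi_h,\pi'_h-\pi_h\ra$ and the prefix $\prod_{i<h}\mathbb{T}_{i,\pi'}$ becomes the expectation under $\pi'$. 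Your version is more elementary and avoids introducing operator notation; the paper's version makes the ``switch one step at a time from $\pi$ to $\pi'$'' structure explicit and is convenient when one later wants to reuse the operator formalism. Substantively they are the same argument viewed at two levels of abstraction.
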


\begin{proof}
See Appendix \ref{pdlmpf} for a detailed proof.
\end{proof}


For notational simplicity, we omit the conditioning on $x_1 = x^k_1$, e.g.,  in \eqref{pd} of Lemma \ref{pdlm}, subsequently. The following lemma characterizes the policy improvement step defined in \eqref{1014522}, where the updated policy $\pi^k$ takes the closed form in \eqref{eq:w1045}.

\begin{lemma}[One-Step Descent]\label{mdlm}
For any distributions $p^*, p\in\Delta(\cA)$, state $x\in\cS$, and function $Q:\cS\times\cA\rightarrow[0,H]$, it holds for $p' \in\Delta(\cA)$ with $p'(\cdot)\propto p(\cdot)\cdot \exp\{\alpha\cdot Q(x,\cdot)\}$ that
\$
&\la Q(x,\cdot), p^*(\cdot) - p(\cdot) \ra  \le \alpha H^2/2+  \alpha^{-1}\cdot \Bigl(\dkl\bigl( p^*(\cdot) \,\big\|\, p(\cdot)\bigr) - 
\dkl\bigl( p^*(\cdot)\,\big\|\, p'(\cdot)\bigr)\Bigr).
\$
\end{lemma}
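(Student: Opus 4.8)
The plan is to recognize the update $p'(\cdot) \propto p(\cdot) \cdot \exp\{\alpha \cdot Q(x,\cdot)\}$ as a single step of exponential-weights (mirror descent with entropic regularizer) and to run the standard three-point telescoping argument on the KL divergence. Throughout I fix the state $x$ and abbreviate $Q(a) := Q(x,a)$, viewing it as a function on $\cA$; write $Z = \sum_{a\in\cA} p(a)\exp\{\alpha Q(a)\}$ for the normalizing constant, so that $p'(a) = p(a)\exp\{\alpha Q(a)\}/Z$.

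First I would establish an exact identity relating the two KL divergences to the inner product. Expanding both divergences and cancelling the common $\log p^*(a)$ terms gives
\[
\dkl(p^* \,\|\, p) - \dkl(p^* \,\|\, p') = \sum_{a\in\cA} p^*(a)\log\frac{p'(a)}{p(a)} = \alpha\la Q, p^*\ra - \log Z,
\]
where the last step uses $p'(a)/p(a) = \exp\{\alpha Q(a)\}/Z$ together with $\sum_a p^*(a) = 1$. Rearranging to isolate $\alpha\la Q, p^*\ra$ and then subtracting $\alpha\la Q,p\ra$ from both sides yields
\[
\alpha\la Q, p^* - p\ra = \dkl(p^* \,\|\, p) - \dkl(p^* \,\|\, p') + \big(\log Z - \alpha\la Q, p\ra\big).
\]

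The crux is to control the ``stability'' term $\log Z - \alpha\la Q, p\ra$, which is precisely the centered log-moment-generating function of $Q$ under $p$. Since $Q$ takes values in $[0,H]$, I would invoke Hoeffding's lemma (or, equivalently, a second-order Taylor expansion of $\alpha\mapsto\log\EE_{a\sim p}[\exp\{\alpha Q(a)\}]$ combined with the crude bound $\mathrm{Var}(Q)\le\EE[Q^2]\le H^2$, which holds under any tilting of $p$ since $Q$ is bounded) to conclude
\[
\log Z = \log\EE_{a\sim p}\big[\exp\{\alpha Q(a)\}\big] \le \alpha\la Q, p\ra + \frac{\alpha^2 H^2}{2}.
\]
Substituting this into the previous display and dividing through by $\alpha > 0$ delivers the claimed inequality.

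The only genuinely quantitative step — and the one place to take care — is this last bound on $\log Z$; everything preceding it is an exact algebraic identity. The boundedness hypothesis $Q\in[0,H]$ enters solely here, and the constant $H^2/2$ comes from bounding the conditional variance of $Q$ by $H^2$; a sharper application of Hoeffding's lemma would in fact give $H^2/8$, so the stated constant is comfortably valid. Note that the argument uses neither the structure of $\cS\times\cA$ nor any property of $Q$ beyond its range, so it applies verbatim at each state $x$ and will later be summed over $h$ and taken in expectation over trajectories when this lemma is combined with Lemma~\ref{pdlm} in the regret analysis.
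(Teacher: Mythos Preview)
Your proof is correct. The underlying algebraic identity you derive is equivalent to the paper's (indeed $\log Z - \alpha\la Q,p\ra = -\dkl(p'\,\|\,p) + \alpha\la Q, p'-p\ra$), but the two proofs diverge in how they control this residual term. The paper applies the three-point identity $\alpha\la Q, p^*-p'\ra = \dkl(p^*\,\|\,p)-\dkl(p^*\,\|\,p')-\dkl(p'\,\|\,p)$, then bounds the leftover $\alpha\la Q, p'-p\ra$ via H\"older and Pinsker's inequality ($\dkl(p'\,\|\,p)\ge \|p-p'\|_1^2/2$), followed by completing the square $-\tfrac12 t^2+\alpha H t\le \alpha^2 H^2/2$. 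You instead recognize $\log Z - \alpha\la Q,p\ra$ as the centered cumulant-generating function of the bounded variable $Q$ and invoke Hoeffding's lemma directly. Your route is shorter, avoids Pinsker entirely, and---as you note---actually yields the sharper constant $\alpha H^2/8$; the paper's mirror-descent-style argument is more standard in the online-learning literature and makes the role of $p'$ as an intermediate iterate more explicit, which aligns with how the lemma is later telescoped across episodes.
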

\begin{proof}
See Appendix \ref{mdlmpf} for a detailed proof.
\end{proof}

Corresponding to the definition of the regret in \eqref{regret}, we define the globally optimal policy in hindsight \citep{cesa2006prediction, bubeck2012regret} as 
\#\label{eq:wpistar}
\pi^* = \argmax_{\pi \in \Delta(\cA\,|\,\cS,H)} \sum_{k=1}^K V^{\pi,k}_1(x^k_1),
\# 
which attains a zero-regret. In the ideal setting where the Q-function $Q^{\pi^k,k}_h$ associated with the reward function $r^k$ is known and the updated policy $\pi^{k+1}_h$ takes the closed form in \eqref{eq:w1045}, Lemma \ref{mdlm} implies 
\#\label{1008704}
&\la Q^{\pi^k,k}_h(x,\cdot), \pi^*_h(\cdot\,|\,x) - \pi^k_h(\cdot\,|\,x) \ra \notag\\
& \quad\le \alpha H^2/2 +  \alpha^{-1}\cdot\Bigl(\dkl\bigl( \pi^*_h(\cdot\,|\,x) \,\big\|\, \pi^k_h(\cdot\,|\,x)\bigr)    - 
\dkl\bigl( \pi^*_h(\cdot\,|\,x) \,\big\|\, \pi^{k+1}_h(\cdot\,|\,x)\bigr)\Bigr)
\#
for any $(h,k)\in[H]\times[K]$ and $x\in\cS$. Combining \eqref{1008704} with Lemma \ref{pdlm}, we obtain
\#\label{1015127}
\text{Regret}(T) & = \sum_{k=1}^K \bigl(V^{\pi^*,k}_1(x^k_1) - V^{\pi^k,k}_1 (x^k_1) \bigr)\notag\\
&=\EE_{\pi^*}\Bigl[ \sum_{k=1}^K \sum_{h=1}^H \la Q^{\pi^k,k}_h(x_h,\cdot), \pi^*_h(\cdot\,|\,x_h) - \pi^k_h(\cdot\,|\,x_h)  \ra \Bigr] \notag\\
&\le \alpha H^3 K/2  + \alpha^{-1}\cdot \sum_{h=1}^H \EE_{\pi^*}  \bigl[ \dkl\bigl( \pi^*_h(\cdot\,|\,x_h) \,\big\|\, \pi^1_h(\cdot\,|\,x_h)\bigr) \bigr] \notag\\
&\le \alpha H^3 K/2 + \alpha^{-1} H\cdot \log|\cA|.
\#
Here the first inequality follows from telescoping the right-hand side of \eqref{1008704} across all the episodes and the fact that the KL-divergence is nonnegative. Also, the second inequality follows from the initialization of the policy and Q-function in Line 1 of Algorithm~\ref{ppoalgo}. Setting $\alpha=\sqrt{2\log|\cA|/ (HT)}$ in \eqref{1015127}, we establish a $\sqrt{H^3T\cdot \log|\cA|}$-regret in the ideal setting.

Such an ideal setting demonstrates the key role of the KL-divergence in the policy improvement step defined in \eqref{1014522}, where $\alpha > 0$ is the stepsize. Intuitively, without the KL-divergence, that is, setting $\alpha\rightarrow\infty$, the upper bound of the regret on the right-hand side of \eqref{1015127} tends to infinity. In fact, for any $\alpha < \infty$, the updated policy $\pi_h^k$ in \eqref{eq:w1045} is ``conservatively'' greedy with respect to the Q-function $Q^{\pi^{k-1}, k-1}_h$ associated with the reward function $r^{k-1}$. In particular, the regularization effect of both $\pi^{k-1}_h$ and $\alpha$ in \eqref{eq:w1045} ensures that $\pi^k_h$ is not ``fully'' committed to perform well only with respect to $r^{k-1}$, just in case the subsequent adversarially chosen reward function $r^k$ significantly differs from $r^{k-1}$. In comparison, the ``fully'' greedy policy improvement
step, which is commonly adopted by the existing work on value-based reinforcement learning \citep{jaksch2010near, osband2014generalization, osband2016lower, azar2017minimax, dann2017unifying, strehl2006pac, jin2018q, jin2019provably, yang2019sample, yang2019reinforcement}, lacks such a notion of robustness. 
On the other hand, an intriguing question is whether being ``conservatively'' greedy is less sample-efficient than being ``fully'' greedy in the stationary setting, where the reward function is fixed across all the episodes. In fact, in the ideal setting where the Q-function $Q^{\pi^{k-1},k-1}_h$ associated with the reward function $r^{k-1}$ in \eqref{eq:w1045} is known, the ``fully'' greedy policy improvement
step with $\alpha\rightarrow \infty$ corresponds to one step of policy iteration \citep{sutton2018reinforcement}, which converges to the globally optimal policy $\pi^*$ within $K = H$ episodes and hence equivalently induces an $H^2$-regret. However, in the realistic setting, the Q-function $Q^{\pi^{k-1}, k-1}_h$ in \eqref{linearappr}-\eqref{eq:w1045} is replaced by the estimated Q-function $Q^{k-1}_h$ in Line 6 of Algorithm~\ref{ppoalgo}, which is obtained by the policy evaluation step defined in \eqref{eq:w1150}. As a result of the estimation uncertainty that arises from only observing finite historical data, it is indeed impossible to do better than the $\sqrt{T}$-regret even in the tabular setting \citep{jin2018q}, which is shown to be an information-theoretic lower bound. In the linear setting, OPPO attains such a lower bound in terms of the total number of steps $T = HK$. In other words, in the stationary setting, being ``conservatively'' greedy suffices to achieve sample-efficiency, which complements its advantages in terms of robustness in the more challenging setting with adversarially chosen reward functions.








\section{Proof Sketch}\label{sketch}


\subsection{Regret Decomposition}\label{1114529}

For the simplicity of discussion, we define the model prediction error as 
\#\label{eq:w11260901}
 \iota^k_h=r^k_h+\mathbb{P}_{h}V^{k}_{h+1} - Q^{k}_h,
\#
which arises from estimating $\mathbb{P}_h V^k_{h+1}$ in the Bellman equation defined in \eqref{eq:w1204} (with $V^{\pi^k, k}_{h+1}$ replaced by $V^k_{h+1}$) based on only finite historical data. Also, we define the following filtration generated by the state-action sequence and reward functions.

\begin{definition}[Filtration]\label{def:w001}
 For any $(k,h)\in[K]\times[H]$, we define $\cF_{k,h,1}$ as the $\sigma$-algebra generated by the following state-action sequence and reward functions,
 \$
 \{(x^\tau_i, a^\tau_i)\}_{(\tau, i)\in [k-1] \times [H]} \cup \{r^\tau\}_{\tau\in [k]} \cup \{(x^k_i, a^k_i)\}_{i\in [h]}.
 \$
 For any $(k,h)\in[K]\times[H-1]$, we define $\cF_{k,h,2}$ as the $\sigma$-algebra generated by 
  \$
& \{(x^\tau_i, a^\tau_i)\}_{(\tau, i)\in [k-1] \times [H]} \cup \{r^\tau\}_{\tau\in [k]}   \cup \{(x^k_i, a^k_i)\}_{i\in [h]} \cup \{x^k_{h+1}\},
 \$
while for any $k\in[K]$ and $h=H$, we define $\cF_{k,h,2}$ as the $\sigma$-algebra generated by 
  \$
& \{(x^\tau_i, a^\tau_i)\}_{(\tau, i)\in [k] \times [H]} \cup \{r^\tau\}_{\tau\in [k+1]}.
 \$
By the above definitions, the $\sigma$-algebra sequence $\{\cF_{k,h,m}\}_{(k,h,m)\in[K]\times[H]\times[2]}$ is a filtration with respect to the timestep index
 \#\label{eq:w1155w}
 t(k,h,m)=(k-1)\cdot 2H+(h-1)\cdot2+m.
 \#
 In other words, for any $t(k,h,m) \le t(k',h',m')$, it holds that $\cF_{k,h,m}\subseteq \cF_{k',h',m'}$.
 
\end{definition}
By the definition of the $\sigma$-algebra $\cF_{k,h,m}$, for any $(k,h)\in[K]\times[H]$, the estimated value function $V^{k}_h$ and Q-function $Q^{k}_h$ are measurable to $\cF_{k,1,1}$, as they are obtained based on the $(k-1)$ historical trajectories and the reward function $r^k$ adversarially chosen by the environment at the beginning of the $k$-th episode, both of which are measurable to $\cF_{k,1,1}$. 

In the following lemma, we decompose the regret defined in \eqref{regret} into three terms. Recall that the globally optimal policy in hindsight  $\pi^*$ is defined in \eqref{eq:wpistar} and the model prediction error $\iota^k_h$ is defined in \eqref{eq:w11260901}.

\begin{lemma}[Regret Decomposition]\label{1005415}
It holds that
\# \label{1015243}
\text{Regret}(T) &= \sum_{k=1}^K \bigl(V^{\pi^*,k}_1(x^k_1) - V^{\pi^k,k}_1(x^k_1)\bigr) \notag\\
&=  \underbrace{\sum_{k=1}^K\sum_{h=1}^H \EE_{\pi^*} \bigl[ \la Q^{k}_h(x_h,\cdot), \pi^*_h(\cdot\,|\,x_h) - \pi^k_h(\cdot\,|\,x_h) \ra \bigr]}_{\dr (i)}  + \underbrace{ \cM_{K, H, 2}}_{\dr (ii)} \\
&\qquad+\underbrace{ \sum_{k=1}^K\sum_{h=1}^H\bigl( \EE_{\pi^*}[\iota^{k}_h(x_h,a_h)] - \iota^{k}_h(x^k_h,a^k_h)\bigr)}_{\dr (iii)},\notag
\#
which is independent of the linear setting in Assumption \ref{linearmdp}. 
Here $\{\cM_{k,h,m}\}_{(k,h,m)\in[K]\times[H]\times[2]}$ is a martingale adapted to the filtration $\{\cF_{k,h,m}\}_{(k,h,m)\in[K]\times[H]\times[2]}$, both with respect to the timestep index $t(k,h,m)$ defined in \eqref{eq:w1155w} of Definition \ref{def:w001}. 
\end{lemma}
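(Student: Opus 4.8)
The plan is to insert the estimated value $V^k_1(x^k_1)$ produced by Algorithm \ref{ppoalgo} as a pivot and split each episode's regret as
\[
V^{\pi^*,k}_1(x^k_1) - V^{\pi^k,k}_1(x^k_1) = \underbrace{\bigl(V^{\pi^*,k}_1(x^k_1) - V^{k}_1(x^k_1)\bigr)}_{A_k} + \underbrace{\bigl(V^{k}_1(x^k_1) - V^{\pi^k,k}_1(x^k_1)\bigr)}_{B_k},
\]
then to expand $A_k$ and $B_k$ by two \emph{different} step-wise telescopings: $A_k$ in expectation along trajectories generated by the comparator $\pi^*$, and $B_k$ pathwise along the realized trajectory generated by $\pi^k$. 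The only ingredients are the Bellman equation \eqref{eq:w1204}, the identity $V_h = \la Q_h, \pi_h\ra$, and the definition \eqref{eq:w11260901} of $\iota^k_h$; no linear structure from Assumption \ref{linearmdp} enters, which is exactly why the decomposition is claimed to hold in general.

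For $A_k$ I would use, at each step, the splitting $V^{\pi^*,k}_h - V^k_h = \la Q^k_h(x_h,\cdot), \pi^*_h(\cdot|x_h) - \pi^k_h(\cdot|x_h)\ra + \la (Q^{\pi^*,k}_h - Q^k_h)(x_h,\cdot), \pi^*_h(\cdot|x_h)\ra$, obtained by adding and subtracting $\la Q^k_h, \pi^*_h\ra$ and using $V^{\pi^*,k}_h = \la Q^{\pi^*,k}_h, \pi^*_h\ra$. Substituting $Q^{\pi^*,k}_h - Q^k_h = \mathbb{P}_h(V^{\pi^*,k}_{h+1} - V^k_{h+1}) + \iota^k_h$, which follows from \eqref{eq:w1204} and \eqref{eq:w11260901}, turns the second inner product into the next-step difference (in $\mathbb{P}_h$-expectation) plus $\la \iota^k_h, \pi^*_h\ra$. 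Unrolling from $h=1$ to $H$ and taking $\EE_{\pi^*}$ yields
\[
A_k = \sum_{h=1}^H \EE_{\pi^*}\bigl[\la Q^k_h(x_h,\cdot), \pi^*_h(\cdot|x_h) - \pi^k_h(\cdot|x_h)\ra\bigr] + \sum_{h=1}^H \EE_{\pi^*}[\iota^k_h(x_h,a_h)],
\]
contributing term (i) and the positive half of term (iii). This step is in the spirit of Lemma \ref{pdlm}, now carried out with the \emph{estimated} $Q^k_h$ so that the estimation error $\iota^k_h$ is exposed.

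For $B_k$ I would telescope along the realized path instead of in expectation. With $\delta^k_h := (V^k_h - V^{\pi^k,k}_h)(x^k_h)$ and $Q^k_h - Q^{\pi^k,k}_h = \mathbb{P}_h(V^k_{h+1} - V^{\pi^k,k}_{h+1}) - \iota^k_h$, I peel off the realized action value and the realized next-state value by introducing $D^k_{h,1} := \la (Q^k_h - Q^{\pi^k,k}_h)(x^k_h,\cdot), \pi^k_h(\cdot|x^k_h)\ra - (Q^k_h - Q^{\pi^k,k}_h)(x^k_h,a^k_h)$ and $D^k_{h,2} := (\mathbb{P}_h(V^k_{h+1} - V^{\pi^k,k}_{h+1}))(x^k_h,a^k_h) - (V^k_{h+1} - V^{\pi^k,k}_{h+1})(x^k_{h+1})$, so that $\delta^k_h = \delta^k_{h+1} + D^k_{h,1} + D^k_{h,2} - \iota^k_h(x^k_h,a^k_h)$ with $\delta^k_{H+1} = 0$. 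Unrolling gives $B_k = \sum_{h=1}^H(D^k_{h,1} + D^k_{h,2}) - \sum_{h=1}^H \iota^k_h(x^k_h,a^k_h)$. Summing $A_k + B_k$ over $k$ and regrouping then produces exactly term (i), the quantity $\cM_{K,H,2} = \sum_{k=1}^K\sum_{h=1}^H (D^k_{h,1} + D^k_{h,2})$, and term (iii) $= \sum_{k=1}^K\sum_{h=1}^H (\EE_{\pi^*}[\iota^k_h(x_h,a_h)] - \iota^k_h(x^k_h,a^k_h))$.

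The final and most delicate point is to certify that $\{\cM_{k,h,m}\}$ is a martingale under the two-substep filtration of Definition \ref{def:w001}. I would use that $Q^k_h$ and $V^k_h$ (and hence $\pi^k$, $Q^{\pi^k,k}_h$, $V^{\pi^k,k}_h$) are $\cF_{k,1,1}$-measurable, so the only fresh randomness in $D^k_{h,1}$ at timestep $t(k,h,1)$ is the action $a^k_h \sim \pi^k_h(\cdot|x^k_h)$, giving $\EE[D^k_{h,1}\,|\,\cF_{k,h-1,2}] = 0$, while the only fresh randomness in $D^k_{h,2}$ at timestep $t(k,h,2)$ is the transition $x^k_{h+1}\sim\mathbb{P}_h(\cdot|x^k_h,a^k_h)$, giving $\EE[D^k_{h,2}\,|\,\cF_{k,h,1}] = 0$. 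Assigning $D^k_{h,1}, D^k_{h,2}$ to timesteps $t(k,h,1), t(k,h,2)$ then identifies $\cM_{K,H,2}$ as the terminal value of a martingale. The main obstacle is bookkeeping rather than analysis: keeping the exact $\EE_{\pi^*}$ treatment of $A_k$ cleanly separated from the pathwise/martingale treatment of $B_k$, and matching the two substep indices to the timestep ordering $t(k,h,m)$ so that the conditional means vanish.
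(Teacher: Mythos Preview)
Your proposal is correct and follows essentially the same approach as the paper's proof: the same pivot $V^k_1(x^k_1)$, the same expectation-along-$\pi^*$ telescope for $A_k$ (the paper writes it with the operators $\mathbb{J}_h$, $\mathbb{J}_{k,h}$ and the shorthand $\xi^k_h$, but the algebra is identical), the same pathwise telescope for $B_k$ with the same martingale differences $D_{k,h,1}$, $D_{k,h,2}$, and the same measurability argument for the martingale property.
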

\begin{proof}
See Appendix \ref{1013242} for a detailed proof.
\end{proof}

Lemma \ref{1005415} allows us to characterize the regret by upper bounding terms (i), (ii), and (iii) in \eqref{1015243}, respectively. In detail, term (i) corresponds to the right-hand side of \eqref{pdlm} in Lemma \ref{pdlm} with the Q-function $Q^{\pi^k,k}_h$ replaced by the estimated Q-function $Q^k_h$, which is obtained by the policy evaluation step defined in \eqref{eq:w1150}. In particular, as the updated policy $\pi^{k+1}_h$ is obtained by the policy improvement step in Line 6 of Algorithm \ref{ppoalgo} using $\pi^k_h$ and $Q^{k}_h$, term (i) can be upper bounded following a similar analysis to the discussion in Section \ref{secmd}, which is based on Lemmas \ref{pdlm} and \ref{mdlm} as well as \eqref{1015127}. Also, by the Azuma-Hoeffding inequality, term (ii) is a martingale that scales as $O(B_\cM\sqrt{T_{\cM}})$ with high probability, where $T_\cM$ is the total number of timesteps and $B_\cM$ is an upper bound of the martingale differences. More specifically, we prove that $T_{\cM}=2HK=2T$ and $B_{\cM}=2H$ in Appendix \ref{1008755}, which implies that term (ii) is $O(\sqrt{H^2T})$ with high probability.  Meanwhile, term (iii) corresponds to the model prediction error, which is characterized subsequently in Section \ref{secucb}. Note that the regret decomposition in \eqref{1015243} of Lemma \ref{1005415} is independent of the linear setting in Assumption \ref{linearmdp}, and therefore, applies to any forms of estimated Q-functions $Q^k_h$ in more general settings. In particular, as long as we can upper bound term (iii) in \eqref{1015243}, our regret analysis can be carried over even beyond the linear setting.

%

\subsection{Model Prediction Error}\label{secucb}
To upper bound term (iii) in \eqref{1015243} of Lemma \ref{1005415}, we characterize the model prediction error $\iota^k_h$ defined in \eqref{eq:w11260901} in the following lemma. Recall that the bonus function $\Gamma^{k}_h$ is defined in \eqref{1015509}.

\begin{lemma}[Upper Confidence Bound] \label{10061229}
Let $\lambda=1$ in \eqref{eq:w1150} and Line 12 of Algorithm \ref{ppoalgo}, and $\beta=C\sqrt{dH^2\cdot \lok}$ in \eqref{1015509} and Line 15 of Algorithm \ref{ppoalgo}, where $C>1$ is an absolute constant and $\zeta\in (0,1]$. Under Assumption \ref{linearmdp}, it holds with probability at least $1-\zeta/2$ that  
\$
 -2\Gamma^{k}_h(x,a) \le \iota^{k}_h(x,a) \le 0
 \$
 for any $(k,h)\in[K]\times[H]$ and $(x,a)\in\cS\times\cA$. 
\end{lemma}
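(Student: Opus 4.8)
The plan is to reduce the two-sided bound on $\iota^k_h$ to a single high-probability confidence statement of the form $|\phi^k_h(x,a)^\top w^k_h - (\mathbb{P}_h V^k_{h+1})(x,a)| \le \Gamma^k_h(x,a)$, and then read off the claim from the definition of $Q^k_h$ and its truncation. The structural fact that makes this work is Assumption~\ref{linearmdp}: since $\cP_h(x'\,|\,x,a)=\psi(x,a,x')^\top\theta_h$, for any bounded $V$ we have $(\mathbb{P}_h V)(x,a)=\int_\cS \psi(x,a,x')^\top\theta_h\cdot V(x')\,\mathrm{d}x' = \bigl(\int_\cS \psi(x,a,x')V(x')\,\mathrm{d}x'\bigr)^\top\theta_h$, so that $(\mathbb{P}_h V^k_{h+1})(x,a)=\phi^k_h(x,a)^\top\theta_h$ with the \emph{same}, sample-independent coefficient $\theta_h$ satisfying $\|\theta_h\|_2\le\sqrt d$. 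This is the crucial difference from the value-based linear MDP of \cite{jin2019provably}: the regression for $w^k_h$ is well specified with a fixed target vector $\theta_h$, so no uniform-convergence argument over value functions is needed.

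First I would expand the estimation error. Writing $\epsilon^\tau_h = V^\tau_{h+1}(x^\tau_{h+1}) - (\mathbb{P}_h V^\tau_{h+1})(x^\tau_h,a^\tau_h)$ and using $(\mathbb{P}_h V^\tau_{h+1})(x^\tau_h,a^\tau_h)=\phi^\tau_h(x^\tau_h,a^\tau_h)^\top\theta_h$, the closed form of $w^k_h$ in \eqref{1015509} gives
\$
w^k_h - \theta_h = -\lambda\,(\Lambda^k_h)^{-1}\theta_h + (\Lambda^k_h)^{-1}\sum_{\tau=1}^{k-1}\phi^\tau_h(x^\tau_h,a^\tau_h)\,\epsilon^\tau_h.
\$
Multiplying by $\phi^k_h(x,a)^\top$ and applying Cauchy--Schwarz in the $(\Lambda^k_h)^{-1}$-norm bounds $|\phi^k_h(x,a)^\top w^k_h - (\mathbb{P}_h V^k_{h+1})(x,a)|$ by $\|\phi^k_h(x,a)\|_{(\Lambda^k_h)^{-1}}$ times $\bigl(\lambda\|\theta_h\|_{(\Lambda^k_h)^{-1}} + \|\sum_{\tau}\phi^\tau_h\epsilon^\tau_h\|_{(\Lambda^k_h)^{-1}}\bigr)$. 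The regularization term is controlled deterministically by $\Lambda^k_h\succeq\lambda I$ and $\|\theta_h\|_2\le\sqrt d$, giving $\lambda\|\theta_h\|_{(\Lambda^k_h)^{-1}}\le\sqrt{\lambda d}=\sqrt d$.

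The main work is the stochastic term, where I would invoke a self-normalized (elliptical-potential) concentration inequality of Abbasi-Yadkori et al.\ type against the filtration of Definition~\ref{def:w001}. For fixed $h$ the feature $\phi^\tau_h(x^\tau_h,a^\tau_h)$ is $\cF_{\tau,h,1}$-measurable (it depends only on $V^\tau_{h+1}$, which is $\cF_{\tau,1,1}$-measurable, and on $(x^\tau_h,a^\tau_h)$), hence predictable, while $\epsilon^\tau_h$ is revealed with $x^\tau_{h+1}$, is mean-zero given $\cF_{\tau,h,1}$, and is bounded by $H$ since $V^\tau_{h+1}\in[0,H]$. Together with $\|\phi^\tau_h\|_2\le\sqrt d H$ from Assumption~\ref{linearmdp}, which yields $\log\det(\Lambda^k_h)\le d\log(1+KH^2)$, the inequality gives $\|\sum_{\tau=1}^{k-1}\phi^\tau_h\epsilon^\tau_h\|_{(\Lambda^k_h)^{-1}}\le C H\sqrt{d\cdot\lok}$ simultaneously for all $(k,h)$ with probability at least $1-\zeta/2$ after a union bound over $h\in[H]$. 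Choosing $\beta=C\sqrt{dH^2\cdot\lok}$ then absorbs both pieces, so that $|\phi^k_h(x,a)^\top w^k_h-(\mathbb{P}_h V^k_{h+1})(x,a)|\le\Gamma^k_h(x,a)$ on this event, uniformly in $(x,a)$.

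Finally I would translate this confidence bound into the claim. On the good event the unclipped estimate $\overbar{Q}^k_h = r^k_h + \phi^k_h{}^\top w^k_h + \Gamma^k_h$ satisfies $y\le\overbar{Q}^k_h\le y+2\Gamma^k_h$, where $y:=r^k_h+\mathbb{P}_h V^k_{h+1}$. Since $V^k_{h+1}\in[0,H-h]$ forces $y\in[0,H-h+1]$, the truncation $Q^k_h=\min\{\overbar{Q}^k_h,H-h+1\}^+$, being monotone and fixing $[0,H-h+1]$, preserves the lower estimate, so $Q^k_h\ge y$, i.e.\ $\iota^k_h\le0$; and $Q^k_h\le(\overbar{Q}^k_h)^+\le y+2\Gamma^k_h$ since $y+2\Gamma^k_h\ge0$, i.e.\ $\iota^k_h\ge-2\Gamma^k_h$. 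The main obstacle is the stochastic step: verifying the predictability and martingale-difference structure against the filtration of Definition~\ref{def:w001} so that the self-normalized bound applies \emph{directly}, and recognizing that the sample-independence of $\theta_h$ is exactly what removes any need for a covering argument over the data-dependent value functions $V^k_{h+1}$.
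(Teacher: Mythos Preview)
Your proposal is correct and follows essentially the same route as the paper's proof: the paper likewise writes $(\mathbb{P}_h V^k_{h+1})(x,a)=\phi^k_h(x,a)^\top\theta_h$, decomposes $\phi^k_h(x,a)^\top w^k_h-(\mathbb{P}_h V^k_{h+1})(x,a)$ into a self-normalized martingale term and a $\lambda\theta_h$ regularization term, bounds them via the Abbasi-Yadkori--Pal--Szepesv\'ari inequality (packaged as Lemma~\ref{eventlm}) and $\|\theta_h\|_2\le\sqrt d$, and then reads off the two-sided bound on $\iota^k_h$ from the truncation of $Q^k_h$. Your remark that the sample-independence of $\theta_h$ obviates any covering argument is exactly the point; one cosmetic slip is that $\|\phi^\tau_h\|_2\le\sqrt d H$ gives $\log\det(\Lambda^k_h)\le d\log(1+KdH^2)$ rather than $d\log(1+KH^2)$, but this is absorbed into $\lok$.
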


\begin{proof}
See Appendix \ref{1013241} for a detailed proof. 
\end{proof}

Lemma \ref{10061229} demonstrates the key role of uncertainty quantification in achieving sample-efficiency. More specifically, due to the uncertainty that arises from only observing finite historical data, the model prediction error $\iota^{k}_h(x, a)$ can be possibly large for the state-action pairs $(x, a)$ that are less visited or even unseen. However, as is shown in Lemma \ref{10061229}, explicitly incorporating the bonus function $\Gamma^{k}_h$ into the estimated Q-function $Q^{k}_h$ ensures that $\iota^{k}_h(x, a) \leq 0$ with high probability for any $(k,h)\in[K]\times[H]$ and $(x,a)\in\cS\times\cA$. In other words, the estimated Q-function $Q^{k}_h$ is ``optimistic in the face of uncertainty'', as $\iota^{k}_h(x, a) \leq 0$ or equivalently
\#\label{eq:w450n}
 Q^{k}_h(x, a) \geq r^k_h(x, a)+ (\mathbb{P}_{h}V^{k}_{h+1})(x, a)
\# 
implies that
 $\EE_{\pi^*}[\iota^{k}_h(x_h,a_h)]$ in term (iii) of \eqref{1015243} is upper bounded by zero. Also, Lemma \ref{10061229} implies that $-\iota^{k}_h(x^k_h,a^k_h) \leq 2\Gamma^{k}_h(x^k_h,a^k_h)$ with high probability for any $(k,h)\in[K]\times[H]$. As a result, it only remains to upper bound the  cumulative sum $\sum_{k=1}^K\sum_{h=1}^H 2\Gamma^{k}_h(x^k_h,a^k_h)$ corresponding to term (iii) in \eqref{1015243}, which can be characterized by the elliptical potential lemma \citep{dani2008stochastic, rusmevichientong2010linearly, chu2011contextual, abbasi2011improved, jin2019provably}. See Appendix \ref{1008755} for a detailed proof.

To illustrate the intuition behind the model prediction error $\iota^k_h$ defined in \eqref{eq:w11260901}, we define the implicitly estimated transition dynamics as
\$
&\hat{\cP}_{k,h}(x'\,|\,x,a) =\psi(x,a,x')^\top(\Lambda^k_h)^{-1}\sum_{\tau=1}^{k-1} \phi^\tau_h(x_h^\tau,a_h^\tau)\cdot V^\tau_{h+1}(x^\tau_{h+1}),
\$
where $\Lambda^k_h$ is defined in \eqref{1015509}. Correspondingly, the policy evaluation step defined in \eqref{eq:w1150} takes the following equivalent form (ignoring the truncation step for the simplicity of discussion),
\#\label{eq:w418n}
Q^k_{h}\leftarrow r^k_h+\hat{\mathbb{P}}_{k,h}V^k_{h+1}+\Gamma^k_h.
\#
Here $\hat{\mathbb{P}}_{k,h}$ is the operator form of the implicitly estimated transition kernel $\hat{\cP}_{k,h}(\cdot\,|\,\cdot,\cdot)$, which is defined by 
\$
(\hat{\mathbb{P}}_{k,h}f) (x,a) = \int_\cS \hat{\cP}_{k,h}(x'\,|\,x,a) \cdot f(x')\bd x'	
\$
for any function $f:\cS\rightarrow \RR$. Correspondingly, by \eqref{eq:w11260901} and \eqref{eq:w418n} we have
\#\label{eq:w1245n}
\iota^k_h&=r^k_h+\mathbb{P}_hV^k_{h+1}-Q^k_h = (\mathbb{P}_h - \hat{\mathbb{P}}_{k,h}) V^{k}_{h+1}-\Gamma^k_h,
\#
where $\mathbb{P}_h - \hat{\mathbb{P}}_{k,h}$ is the error that arises from implicitly estimating the transition dynamics based on only finite historical data. Such a model estimation error enters the regret in \eqref{1015243} of Lemma \ref{1005415} only through the model prediction error $(\mathbb{P}_h - \hat{\mathbb{P}}_{k,h}) V^{k}_{h+1}$, which allows us to bypass explicitly estimating the transition dynamics, and instead,  employ the estimated Q-function $Q^k_h$ obtained by the policy evaluation step defined in \eqref{eq:w418n}. As is shown in Appendix \ref{1013241}, the bonus function $\Gamma^k_h$ upper bounds $(\mathbb{P}_h - \hat{\mathbb{P}}_{k,h}) V^{k}_{h+1}$ in \eqref{eq:w1245n} with high probability for any $(k,h)\in[K]\times[H]$ and $(x, a) \in \cS \times \cA$, which then ensures the optimism of the estimated Q-function $Q^{k}_h$ in the sense of \eqref{eq:w450n}.

\section{Conclusion}
We study the sample efficiency of policy-based reinforcement learning in the episodic setting of linear MDPs with full-information feedback. We proposed an optimistic variant of the proximal policy optimization algorithm, dubbed as OPPO, which incorporates the principle of ``optimism in the face of uncertainty'' into policy optimization. When applied to the episodic MDP with unknown transition and adversarial reward, OPPO provably achieves a $\reg$-regret up to logarithmic factors, which is near-optimal. To the best of our knowledge, OPPO is the first provably efficient policy optimization algorithm that explicitly incorporates exploration. 

\section*{Acknowledgements}
The authors would like to thank Lingxiao Wang, Wen Sun, and Sham Kakade for pointing out a technical issue in the first version regarding the covering number of value functions in the linear setting. This version has fixed the technical issue with a definition of the linear MDP different from the one in the first version. The authors would also like to thank Csaba Szepesv\'ari, Lin F. Yang, Yining Wang, and Simon S. Du for helpful discussions. Zhaoran Wang acknowledges National Science Foundation (Awards 2048075, 2008827, 2015568, 1934931), Simons Institute (Theory of Reinforcement Learning), Amazon, J.P. Morgan, and Two Sigma for their supports.

\bibliographystyle{ims}
\bibliography{graphbib.bib}

\newpage
\appendix{}
\section{Proofs of Lemmas in Section \ref{results}}
\subsection{Proof of Lemma \ref{pdlm}} \label{pdlmpf}
\begin{proof}
In this section, we focus on the $k$-th episode and omit the episode index $k$ for notational simplicity. For any $h\in[H]$ and policy $\pi\in\Delta(\cA\,|\,\cS,H)$, we define the Bellman evaluation operator $\mathbb{T}_{h,\pi}$ by
\#\label{eq:w635n}
(\mathbb{T}_{h,\pi} V)(x)&=\EE[ r_h(x,a) + V(x') \,|\, a\sim \pi_{h}(\cdot\,|\,x) ,\, x'\sim \cP_h(\cdot\,|\,x,a)] \notag\\
&= \la (r_h + \mathbb{P}_h V)(x,\cdot), \pi_h(\cdot\,|\,x)  \ra
\#
for any function $V:\cS\rightarrow \RR$. By the definition of the value function $V^{\pi}_h$ in \eqref{eq:vfunc}, we have
\#\label{1009258}
V^{\pi}_h = \prod_{i=h}^H \mathbb{T}_{i,\pi} \mathbf{0}
\#
for any $h\in[H]$, where $\mathbf{0}$ is a zero function on $\cS$. Here $\prod_{i=h}^H \mathbb{T}_{i,\pi}$ denotes the sequential composition of the Bellman evaluation operators $\mathbb{T}_{i,\pi}$. Thus, for any policies $\pi', \pi\in\Delta(\cA\,|\,\cS,H)$, it holds that
\#\label{1009317}
V^{\pi'}_1- V^{\pi}_1 &=  \prod_{h=1}^H \mathbb{T}_{h,\pi'} \mathbf{0} -  \prod_{h=1}^H \mathbb{T}_{h,\pi} \mathbf{0} \notag \\
&= \prod_{h=1}^H \mathbb{T}_{h,\pi'} \mathbf{0}- \sum_{h=1}^{H-1} \Bigl(\prod_{i=1}^h \mathbb{T}_{i,\pi'} \prod_{i=h+1}^H \mathbb{T}_{i,\pi} \mathbf{0}-\prod_{i=1}^h \mathbb{T}_{i,\pi'} \prod_{i=h+1}^H \mathbb{T}_{i,\pi}\mathbf{0}  \Bigr)  -  \prod_{h=1}^H \mathbb{T}_{h,\pi} \mathbf{0} \notag   \\
&=\sum_{h=H}^1 \Bigl(\prod_{i=1}^h \mathbb{T}_{i,\pi'} \prod_{i=h+1}^H \mathbb{T}_{i,\pi}\mathbf{0}-\prod_{i=1}^{h-1} \mathbb{T}_{i,\pi'} \prod_{i=h}^H \mathbb{T}_{i,\pi}\mathbf{0}\Bigr).
\#
 Meanwhile, by \eqref{1009258} we have that, on the right-hand side of \eqref{1009317},
\#\label{1009318}
&\prod_{i=1}^h \mathbb{T}_{i,\pi'} \prod_{i=h+1}^H \mathbb{T}_{i,\pi}\mathbf{0} -\prod_{i=1}^{h-1} \mathbb{T}_{i,\pi'} \prod_{i=h}^H \mathbb{T}_{i,\pi}\mathbf{0} \notag  \\
&\quad= \prod_{i=1}^{h-1} \mathbb{T}_{i,\pi'} (\mathbb{T}_{h,\pi'}-\mathbb{T}_{h,\pi} )\prod_{i=h+1}^H \mathbb{T}_{i,\pi} \mathbf{0}
= \prod_{i=1}^{h-1} \mathbb{T}_{i,\pi'} (\mathbb{T}_{h,\pi'}-\mathbb{T}_{h,\pi} ) V^{\pi}_{h+1}.
\#
By the definition of the Bellman evaluation operator $\mathbb{T}_{h,\pi}$ in \eqref{eq:w635n}, we have
\#\label{1009319}
&(\mathbb{T}_{h,\pi'}-\mathbb{T}_{h,\pi} ) V^{\pi}_{h+1} = \la r_h+\mathbb{P}_hV^\pi_{h+1}, \pi'_h-\pi_h \ra_{\cA}= \la Q^\pi_{h}, \pi'_h-\pi_h \ra_{\cA},
\#
where the last equality follows from \eqref{eq:w1204}. Combining \eqref{1009317}, \eqref{1009318}, \eqref{1009319}, and the linearity of the Bellman evaluation operator defined in \eqref{eq:w635n}, we obtain
\$
V^{\pi'}_1(x_1)- V^{\pi}_1(x_1) &=\sum_{h=1}^H \Bigl(\prod_{i=1}^{h-1} \mathbb{T}_{i,\pi'}  \la Q^\pi_{h}, \pi'_h-\pi_h \ra_{\cA} \Bigr)(x_1) \\
&= \EE_{\pi'}\Bigl[\sum_{h=1}^H \la Q^{\pi}_h(x_h,\cdot), \pi'_h(\cdot\,|\,x_h) - \pi_h(\cdot\,|\,x_h)  \ra\,\Big|\, x_1 \Bigr],
\$ 
which concludes the proof of Lemma \ref{pdlm}.
\end{proof}

\subsection{Proof of Lemma \ref{mdlm}} \label{mdlmpf}
\begin{proof}
For any function $g:\cA\rightarrow\RR$ and distributions $p,p',p^* \in\Delta(\cA)$ that satisfy 
\$
p'(\cdot)\propto p(\cdot)\cdot \exp\bigl(\alpha\cdot g(\cdot)\bigr),
\$ 
we have 
\#\label{1009403}
\alpha\cdot \la g, p^* - p' \ra &=  \la z+\log(p'/p), p^*-p' \ra \notag\\
&=\la z, p^*-p' \ra + \la \log(p^*/p), p^* \ra + \la \log(p'/p^*), p^* \ra  + \la \log(p'/p), -p' \ra \notag\\
&= \dkl(p^*\,\|\,p) - \dkl(p^*\,\|\,p') - \dkl(p'\,\|\,p).
\#
Here $z: \cA \rightarrow \RR$ is a constant function defined by
\$
z(a) = \log\Bigl(\sum_{a'\in\cA}p(a')\cdot\exp\bigl(\alpha\cdot g(a')\bigr)\Bigr),
\$
which implies that $\la z, p^*-p' \ra = 0$ in \eqref{1009403} as $p',p^* \in\Delta(\cA)$. Moreover, by \eqref{1009403} we have
\#\label{1009359}
\alpha \cdot \la Q(x,\cdot), {p}^*(\cdot) - {p}(\cdot) \ra&= \alpha \cdot \la Q(x,\cdot), {p}^*(\cdot) - {p}'(\cdot) \ra - \alpha \cdot \la Q(x,\cdot), {p}(\cdot) - {p}'(\cdot) \ra \notag \\
&  \le \dkl\bigl(p^*(\cdot)\,\big\|\,p(\cdot)\bigr) - \dkl\bigl(p^*(\cdot)\,\big\|\,p'(\cdot)\bigr) - \dkl\bigl(p'(\cdot)\,\big\|\,p(\cdot)\bigr) \\
&\qquad+ \alpha \cdot \|Q(x,\cdot)\|_{\infty}\cdot \| {p}(\cdot) - {p}'(\cdot)\|_1\notag
\#
for any state $x \in \cS$. Meanwhile, by Pinsker's inequality, it holds that 
\#\label{eq:wpinsker}
\dkl({p}' \,\|\, {p})\ge \|{p}-{p}'\|^2_1/2.
\#
Combining \eqref{1009359}, \eqref{eq:wpinsker}, and the fact that $\|Q(x,\cdot)\|_{\infty}\le H$ for any state $x \in \cS$, we obtain
\$
&\alpha \cdot \la Q(x,\cdot), {p}^*(\cdot) - {p}(\cdot) \ra \\
& \quad \le \dkl\bigl(p^*(\cdot)\,\big\|\,p(\cdot)\bigr) - \dkl\bigl(p^*(\cdot)\,\big\|\,p'(\cdot)\bigr) - \| {p}(\cdot) - {p}'(\cdot)\|_1^2/2 + \alpha H \cdot \| {p}(\cdot) - {p}'(\cdot)\|_1 \\
& \quad \le \dkl\bigl(p^*(\cdot)\,\big\|\,p(\cdot)\bigr) - \dkl\bigl(p^*(\cdot)\,\big\|\,p'(\cdot)\bigr)  + \alpha^2 H^2/2,
\$
which concludes the proof of Lemma \ref{mdlm}.
\end{proof}

\section{Proofs of Lemmas in Section \ref{sketch} }
For notational simplicity, we define the operators $\mathbb{J}_h$ and $\mathbb{J}_{k,h}$ respectively by
\#\label{eq:w256a}
(\mathbb{J}_h f)(x) = \la f(x,\cdot), \pi^*_h(\cdot\,|\,x) \ra, \quad
(\mathbb{J}_{k,h} f)(x) = \la f(x,\cdot), \pi^k_h(\cdot\,|\,x) \ra
\#
for any $(k,h)\in[K]\times[H]$ and function $f: \cS\times\cA\rightarrow \RR$. Also, we define
\#\label{eq:w256b}
\xi^k_h(x) = (\mathbb{J}_{h} Q^{k}_h)(x) - (\mathbb{J}_{k,h} Q^{k}_h)(x) = \la Q^{k}_h(x,\cdot), \pi^*_h(\cdot\,|\,x) - \pi^k_h(\cdot\,|\,x) \ra
\#
for any $(k,h)\in[K]\times[H]$ and state $x\in\cS$.
\subsection{Proof of Lemma \ref{1005415}} \label{1013242}
\begin{proof}

We decompose the instantaneous regret at the $k$-th episode into the following two terms,
\#\label{1015212}
V^{\pi^*,k}_1(x_1^k) - V^{\pi^k,k}_1(x_1^k) = \underbrace{V^{\pi^*,k}_1(x_1^k) - V^{k}_1(x_1^k)}_{\dr (i)} + \underbrace{V^{k}_1(x_1^k) - V^{\pi^k,k}_1(x_1^k)}_{\dr (ii)}. 
\#

\vspace{4pt}
\noindent
{\bf Term (i).} By the definitions of the value function $V^{\pi^*, k}_h$ in \eqref{eq:w1204}, the estimated value function $V^{k}_h$ in \eqref{eq:w230a}, the operators $\mathbb{J}_h$ and $\mathbb{J}_{k,h}$ in \eqref{eq:w256a}, and $\xi^k_h$ in \eqref{eq:w256b}, we have
\#\label{1005609}
V^{\pi^*,k}_h - V^{k}_h &=  \mathbb{J}_h Q^{\pi^*,k}_h - \mathbb{J}_{k,h} Q^{k}_h \notag\\
&= \mathbb{J}_h ( Q^{\pi^*,k}_h - Q^{k}_h) + (\mathbb{J}_h - \mathbb{J}_{k,h} )Q^{k}_h
 = \mathbb{J}_h ( Q^{\pi^*,k}_h - Q^{k}_h) + \xi^k_h
\#
for any $(k,h)\in[K]\times[H]$. Meanwhile, by the definition of the model prediction error, that is, $ \iota^k_h=r_h^k+\mathbb{P}_{h}V^{k}_{h+1} - Q^{k}_h$, we have that, on the right-hand side of \eqref{1005609},
\$
Q^{\pi^*,k}_h &= r^k_h + \mathbb{P}_h V^{\pi^*,k}_{h+1}, \quad
Q^{k}_h = r^k_h + \mathbb{P}_h V^{k}_{h+1} - \iota^{k}_h,
\$ 
which implies 
\#\label{1005610}
Q^{\pi^*,k}_h - Q^{k}_h = \mathbb{P}_h (V^{\pi^*,k}_{h+1} - V^{k}_{h+1}) + \iota^{k}_h. 
\#
Combining \eqref{1005609} and \eqref{1005610}, we obtain
\#\label{1005611}
V^{\pi^*,k}_h - V^{k}_h =
\mathbb{J}_h\mathbb{P}_h (V^{\pi^*,k}_{h+1} - V^{k}_{h+1}) + \mathbb{J}_h \iota^{k}_h + \xi^k_h
\#
for any $(k,h)\in[K]\times[H]$. For any $k \in [K]$, recursively expanding \eqref{1005611} across $h \in [H]$ yields
\$
&V^{\pi^*,k}_1 - V^{k}_1  =
\Bigl(\prod_{h=1}^H \mathbb{J}_h\mathbb{P}_h \Bigr) (V^{\pi^*,k}_{H+1} - V^{k}_{H+1})
+ \sum_{h=1}^H \Bigl(\prod_{i=1}^{h-1} \mathbb{J}_i\mathbb{P}_i \Bigr)
\mathbb{J}_h \iota^{k}_h +
\sum_{h=1}^H \Bigl(\prod_{i=1}^{h-1} \mathbb{J}_i\mathbb{P}_i \Bigr) \xi^k_h,
\$
where $V^{\pi^*,k}_{H+1} = V^{k}_{H+1}=\zero$. Therefore, we obtain
\$
&V^{\pi^*,k}_1 - V^{k}_1 =\sum_{h=1}^H \Bigl(\prod_{i=1}^{h-1} \mathbb{J}_i\mathbb{P}_i \Bigr)
\mathbb{J}_h \iota^{k}_h +
\sum_{h=1}^H \Bigl(\prod_{i=1}^{h-1} \mathbb{J}_i\mathbb{P}_i \Bigr) \xi^k_h.
\$
By the definitions of $\mathbb{P}_h$ in \eqref{eq:w256c}, $\mathbb{J}_h$ in \eqref{eq:w256a}, and $\xi^k_h$ in \eqref{eq:w256b}, we further obtain 
\#\label{1005825}
&V^{\pi^*,k}_1(x_1^k) - V^{k}_1(x_1^k) \\
&\quad=\sum_{h=1}^H \EE_{\pi^*}[\iota^{k}_h(x_h,a_h)\,|\, x_1 = x_1^k] +
\sum_{h=1}^H \EE_{\pi^*} \bigl[\la Q^{k}_h(x_h,\cdot), \pi^*_h(\cdot\,|\,x_h) - \pi^k_h(\cdot\,|\,x_h) \ra \,\big|\, x_1 = x_1^k\bigr]\notag
\#
for any $k \in [K]$.

\vspace{4pt}
\noindent
{\bf Term (ii).} By the definitions of the value function $V^{\pi^k, k}_h$ in \eqref{eq:w1204}, the estimated value function $V^{k}_h$ in \eqref{eq:w230a}, and the operator $\mathbb{J}_{k,h}$ in \eqref{eq:w256a}, we have
\#\label{1005720}
&V^{k}_h(x^k_h) - V^{\pi^k,k}_h(x^k_h) =\bigl(\mathbb{J}_{k,h} (Q^{k}_h-Q^{\pi^k,k}_h)\bigr)(x^k_h) + \iota^{k}_h(x^k_h,a^k_h) - \iota^{k}_h(x^k_h,a^k_h)
\#
for any $(k,h)\in[K]\times[H]$. By the definition of the model prediction error $\iota^{k}_h$ in \eqref{eq:w11260901}, we have
\#\label{1005721}
\iota^{k}_h(x^k_h,a^k_h)&= r^k_h(x^k_h,a^k_h) + (\mathbb{P}_h V^{k}_{h+1})(x^k_h,a^k_h) - Q^{k}_h(x^k_h,a^k_h)  \notag\\
&=\bigl( r^k_h(x^k_h,a^k_h) + (\mathbb{P}_h V^{k}_{h+1})(x^k_h,a^k_h) - Q^{\pi^k,k}_h(x^k_h,a^k_h) \bigr) + \bigl( Q^{\pi^k,k}_h(x^k_h,a^k_h) - Q^{k}_h(x^k_h,a^k_h) \bigr) \notag\\
&=\bigl(\mathbb{P}_h (V^{k}_{h+1}-V^{\pi^k,k}_{h+1})\bigr)(x^k_h,a^k_h) + ( Q^{\pi^k,k}_h- Q^{k}_h)(x^k_h,a^k_h),
\#
where the last equality follows from \eqref{eq:w1204}. 
Plugging \eqref{1005721} into \eqref{1005720}, we obtain
\#\label{1005805a}
V^{k}_h(x^k_h) - V^{\pi^k,k}_h(x^k_h) &=\bigl(\mathbb{J}_{k,h} (Q^{k}_h-Q^{\pi^k,k}_h)\bigr)(x^k_h) + ( Q^{\pi^k,k}_h- Q^{k}_h)(x^k_h,a^k_h)  \\
&\qquad + \bigl(\mathbb{P}_h (V^{k}_{h+1}-V^{\pi^k,k}_{h+1})\bigr)(x^k_h,a^k_h) -  \iota^{k}_h(x^k_h,a^k_h),\notag
\#
which implies
\#\label{1005805}
V^{k}_h(x^k_h) - V^{\pi^k,k}_h(x^k_h)& = \underbrace{\bigl(\mathbb{J}_{k,h} (Q^{k}_h-Q^{\pi^k,k}_h)\bigr)(x^k_h) - ( Q^{k}_h-Q^{\pi^k,k}_h)(x^k_h,a^k_h)}_{\displaystyle D_{k,h,1}} \\
&\qquad + \underbrace{ \bigl(\mathbb{P}_h (V^{k}_{h+1}-V^{\pi^k,k}_{h+1})\bigr)(x^k_h,a^k_h) - (V^{k}_{h+1}-V^{\pi^k,k}_{h+1})(x^k_{h+1})}_{\displaystyle D_{k,{h},2}}\notag\\
&\qquad + (V^{k}_{h+1}-V^{\pi^k,k}_{h+1})(x^k_{h+1})-  \iota^{k}_h(x^k_h,a^k_h)\notag
\#
for any $(k,h)\in[K]\times[H]$. For any $k \in [K]$, recursively expanding \eqref{1005805} across $h \in [H]$ yields
\$
&V^{k}_1(x^k_1) - V^{\pi^k,k}_1(x^k_1) \\
&\quad= 
V^{k}_{H+1}(x^k_{H+1}) - V^{\pi^k,k}_{H+1}(x^k_{H+1}) - \sum_{h=1}^H \iota^{k}_h(x^k_h,a^k_h) + \sum_{h=1}^H (D_{k,h,1}+D_{k,h,2}),
\$
where $V^{k}_{H+1}(x^k_{H+1}) = V^{\pi^k,k}_{H+1}(x^k_{H+1})=0$. Therefore, we obtain
\#\label{1005810}
&V^{k}_1(x^k_1) - V^{\pi^k,k}_1(x^k_1) = 
 - \sum_{h=1}^H \iota^{k}_h(x^k_h,a^k_h) + \sum_{h=1}^H (D_{k,h,1}+D_{k,h,2}).
\#
By Definition \ref{def:w001} and the definitions of $D_{k,h,1}$ and $D_{k,h,2}$ in \eqref{1005805}, we have
\#\label{eq:w560a}
D_{k,h,1}\in\cF_{k,h,1},~~D_{k,h,2}\in\cF_{k,h,2},~~\EE[D_{k,h,1}\,|\,\cF_{k,h-1,2}]=0,~~\EE[D_{k,h,2}\,|\,\cF_{k,h,1}]=0
\#
for any $(k,h)\in[K]\times[H]$. Here we have that $\cF_{k,0,2}=\cF_{k-1,H,2}$ for any $k\ge2$, as \eqref{eq:w1155w} of Definition \ref{def:w001} implies
\$
t(k, 0, 2) = t(k-1, H, 2) = (k-1)\cdot 2H.
\$ 
Also, we define $\cF_{1,0,2}$ to be empty. Thus, \eqref{eq:w560a} allows us to define the  martingale
\#\label{eq:w445ak}
\cM_{k,h,m} &= \sum_{\tau=1}^{k-1}\sum_{i=1}^H (D_{\tau,i,1}+D_{\tau,i,2}) + \sum_{i=1}^{h-1} (D_{k,i,1}+D_{k,i,2}) + \sum_{\ell=1}^{m} D_{k,h,\ell} \notag\\
&= \sum_{\substack{(\tau,i,\ell)\in[K]\times[H]\times[2],\\ t(\tau,i,\ell)\le t(k,h,m)}} D_{\tau,i,\ell}
\# 
with respect to the timestep index $t(k,h,m)$ defined in \eqref{eq:w1155w} of Definition \ref{def:w001}. Such a~martingale is adapted to the filtration $\{\cF_{k,h,m}\}_{(k,h,m)\in[K]\times[H]\times[2]}$. In particular, we have that, on the right-hand side of \eqref{1005810},
\#\label{1005844}
\sum_{k=1}^K\sum_{h=1}^{H} (D_{k,h,1}+D_{k,h,2}) = \cM_{K,H, 2}.
\#

Combining \eqref{1015212}, \eqref{1005825}, \eqref{1005810}, and \eqref{1005844}, we obtain 
\$
\sum_{k=1}^K \bigl( V^{\pi^*,k}_1(x_1^k) - V^{\pi^k,k}_1(x_1^k) \bigr) &= \sum_{k=1}^K \sum_{h=1}^H \EE_{\pi^*}[\iota^{k}_h(x_h,a_h)\,|\, x_1 = x_1^k] \\
&\qquad +
\sum_{k=1}^K \sum_{h=1}^H \EE_{\pi^*} \bigl[\la Q^{k}_h(x_h,\cdot), \pi^*_h(\cdot\,|\,x_h) - \pi^k_h(\cdot\,|\,x_h) \ra\,\big|\, x_1 = x_1^k\bigr] \\
&\qquad - \sum_{k=1}^K\sum_{h=1}^H \iota^{k}_h(x^k_h,a^k_h) + \cM_{K,H, 2},
\$
which concludes the proof of Lemma \ref{1005415}.
\end{proof}

\subsection{Proof of Lemma \ref{10061229}} \label{1013241}

\begin{proof}
Recall that $\phi_{h}^k$ defined in \eqref{eq:w230a} takes the following form,
\$
\phi_h^k (x,a) = \int_{\cS} \psi (x,a, x') \cdot V_{h+1}^k (x') \ud x'
\$ 
for any $(k,h) \in [ K ] \times [H]$ and $(x,a) \in \cS \times \cA$. 
Also, recall that 
the estimated Q-function $Q^{k}_h$ obtained by the policy evaluation step defined in \eqref{eq:w1150} takes the following form,
\#\label{1006221}
&Q^{k}_h(x,a) = \min\{ r^k_h(x,a) + \phi^k_h(x,a)^\top w^k_h + \Gamma^k_h(x,a), H-h+1\}^+,\\
&\text{where~~}w^{k}_h   = (\Lambda^k_h)^{-1} \Bigl( \sum_{\tau=1}^{k-1} \phi^\tau_h(x_h^\tau,a_h^\tau)\cdot V_{h+1}^\tau(x_{h+1}^\tau) \Bigr)\notag
\#
for any $(k,h) \in [ K ] \times [H]$ and $(x,a) \in \cS \times \cA$. Here $\Gamma^{k}_h$ and $\Lambda_h^k$ are  defined in \eqref{1015509}. Meanwhile, by Assumption \ref{linearmdp} we have
\#\label{1007445}
(\mathbb{P}_h V^{k}_{h+1})(x,a)&= \int_{\cS} \psi(x,a,x')^\top \theta_h \cdot V^k_{h+1}(x')\ud x '  \notag\\
&= \phi^k_h(x,a)^\top \theta_h= \phi^k_h(x,a)^\top(\Lambda^k_h)^{-1}\Lambda^k_h \theta_h
\#
for any $(k,h) \in [ K ] \times [H]$ and $(x,a) \in \cS \times \cA$. Plugging the definition of $\Lambda^{k}_h$ in \eqref{1015509} into \eqref{1007445}, we obtain
\#\label{1006222}
(\mathbb{P}_h V^{k}_{h+1})(x,a) &= \phi^k_h(x,a)^\top (\Lambda^{k}_h)^{-1} \Bigl(\sum_{\tau=1}^{k-1}\phi^\tau_h(x_h^\tau,a_h^\tau)\phi^\tau_h(x_h^\tau,a_h^\tau)^\top\theta_h + \lambda\cdot \theta_h \Bigr)\notag\\
&= \phi^k_h(x,a)^\top (\Lambda^{k}_h)^{-1} \Bigl(\sum_{\tau=1}^{k-1}\phi^\tau_h(x_h^\tau,a_h^\tau)\cdot (\mathbb{P}_hV^{\tau}_{h+1})(x_h^\tau,a_h^\tau) + \lambda\cdot \theta_h \Bigr)
\#
for any $(k,h) \in [ K ] \times [H]$ and $(x,a) \in \cS \times \cA$. Here the second equality follows from \eqref{1007445} with $V^{k}_{h+1}$ replaced by $V_{h+1}^\tau$ for any $\tau\in [k-1]$. Combining \eqref{1006221} and \eqref{1006222}, we obtain
\#\label{1006646}
&\phi^k_h(x,a)^\top w^{k}_h - (\mathbb{P}_h V^{k}_{h+1})(x,a) \\
&\quad=\underbrace{\phi^k_h(x,a)^\top(\Lambda^{k}_h)^{-1} \Bigl(\sum_{\tau=1}^{k-1} \phi^\tau_h(x_h^\tau,a_h^\tau)
\cdot \bigl(V^{\tau}_{h+1}(x_{h+1}^\tau) - (\mathbb{P}_hV^{\tau}_{h+1})(x_h^\tau,a_h^\tau)\bigr)\Bigr)}_{\dr (i)}  - \underbrace{\lambda\cdot
\phi^k_h(x,a)^\top (\Lambda^{k}_h)^{-1} \theta_h }_{\dr (ii)}\notag
\#
for any $(k,h) \in [ K ] \times [H]$ and $(x,a) \in \cS \times \cA$.

\vspace{4pt}
\noindent
{\bf Term (i).} As is defined in \eqref{1015509}, $(\Lambda^{k}_h)^{-1}$ is a positive-definite matrix. By the Cauchy-Schwarz inequality, the absolute value of term (i) is upper bounded as
\#\label{1006643}
|{\dr (i)}|
&\le \sqrt{\phi^k_h(x,a)^\top(\Lambda^{k}_h)^{-1}\phi^k_h(x,a)} \cdot \Bigl\| \sum_{\tau=1}^{k-1} \phi^\tau_h(x_h^\tau,a_h^\tau) \cdot \bigl( V^{\tau}_{h+1}(x^\tau_{h+1}) - (\mathbb{P}_hV^{\tau}_{h+1})(x^\tau_{h}, a^\tau_h) \bigr) \Bigr\|_{(\Lambda^{k}_h)^{-1}}
\#
for any $(k,h) \in [ K ] \times [H]$ and $(x,a) \in \cS \times \cA$. Under the event $\mathcal{E}$ defined in \eqref{event} of Lemma \ref{eventlm}, which happens with probability at least $1-\zeta/2$,
it holds that 
\#\label{200703356}
|{\dr (i)}| \le C'' \sqrt{dH^2\cdot\lok}\cdot\sqrt{\phi^k_h(x,a)^\top(\Lambda^{k}_h)^{-1}\phi^k_h(x,a)}
\#
for any $(k,h) \in [ K ] \times [H]$ and $(x,a) \in \cS \times \cA$. Here $C''>0$ is an absolute constant and $\zeta\in (0,1]$.

\vspace{4pt}
\noindent
{\bf Term (ii).} Similar to \eqref{1006643}, the absolute value of term (ii) is upper bounded as 
\#\label{1006644}
|{\dr (ii)}|&\le \lambda\cdot \sqrt{\phi^k_h(x,a)^\top(\Lambda^{k}_h)^{-1}\phi^k_h(x,a)} \cdot \| \theta_h \|_{(\Lambda^{k}_h)^{-1}} \notag\\
& \le \sqrt{\lambda}\cdot \sqrt{\phi^k_h(x,a)^\top(\Lambda^{k}_h)^{-1}\phi^k_h(x,a)} \cdot \|  \theta_h \|_2 \le \sqrt{\lambda d}\cdot \sqrt{\phi^k_h(x,a)^\top(\Lambda^{k}_h)^{-1}\phi^k_h(x,a)}
\#
for any $(k,h) \in [ K ] \times [H]$ and $(x,a) \in \cS \times \cA$. Here the first inequality follows from the Cauchy-Schwarz inequality, 
the second  inequality follows from the fact that $\Lambda^{k}_h\succeq \lambda\cdot{I}$, and the last inequality follows from Assumption \ref{linearmdp}, which assumes that $\|  \theta_h \|_2 \leq \sqrt{d}$.

Combining \eqref{1006646}, \eqref{200703356}, \eqref{1006644}, and the fact that $\lambda = 1$, we obtain
\#\label{10081252}
&| \phi^k_h(x,a)^\top w^{k}_h - (\mathbb{P}_h V^{k}_{h+1}) (x,a)| \notag\\
&\quad \le C \sqrt{dH^2 \cdot \lok}\cdot\sqrt{\phi^k_h(x,a)^\top(\Lambda^{k}_h)^{-1}\phi^k_h(x,a)} 
\#
for any $(k,h) \in [ K ] \times [H]$ and $(x,a) \in \cS \times \cA$ under the event $\mathcal{E}$ defined in \eqref{event} of Lemma \ref{eventlm}. Here $C> 1$ is an absolute constant. Setting 
\$
\beta = C \sqrt{dH^2 \cdot \lok}
\$
in the bonus function $\Gamma^{k}_h$ defined in \eqref{1015509}, by \eqref{10081252} we obtain 
\#\label{eq:w1048k}
| \phi^k_h(x,a)^\top w^{k}_h - (\mathbb{P}_h V^{k}_{h+1})(x,a)| \le \Gamma^k_h(x,a)
\#
for any $(k,h) \in [ K ] \times [H]$ and $(x,a) \in \cS \times \cA$ under $\mathcal{E}$. As \eqref{eq:w1150} implies that $(\mathbb{P}_h V^{k}_{h+1})(x, a)\ge0$, by \eqref{eq:w1048k} we have 
 \#\label{eq:w1055k}
 \phi^k_h(x,a)^\top w^{k}_h + \Gamma^{k}_h(x,a)\ge0
 \# 
 for any $(k,h) \in [ K ] \times [H]$ and $(x,a) \in \cS \times \cA$ under $\mathcal{E}$. Hence, for the model prediction error $\iota^{k}_h$ defined in \eqref{eq:w11260901}, by \eqref{1006221}, \eqref{eq:w1048k}, and \eqref{eq:w1055k} we have
\#\label{eq:w1100k}
-\iota^{k}_h(x,a)&=Q^{k}_h(x,a) - (r^k_h + \mathbb{P}_h V^{k}_{h+1} )(x,a)\notag\\
&\le
r^k_h(x,a)+\phi^k_h(x,a)^\top w^{k}_h + \Gamma^{k}_h(x,a) - (r^k_h + \mathbb{P}_h V^{k}_{h+1} )(x,a) \le  2\Gamma^{k}_h(x,a)
\#
for any $(k,h) \in [ K ] \times [H]$ and $(x,a) \in \cS \times \cA$ under $\mathcal{E}$. Meanwhile, as \eqref{eq:w1150} implies that $(\mathbb{P}_h V^{k}_{h+1})(x, a)\leq H-h$ and hence 
\$
(r^k_h + \mathbb{P}_h V^{k}_{h+1})(x,a)\leq H-h+1,
\$
by \eqref{eq:w11260901}, \eqref{1006221}, and \eqref{eq:w1048k} we have
\#\label{eq:w1101k}
\iota^{k}_h(x,a)&=(r^k_h + \mathbb{P}_h V^{k}_{h+1} )(x,a)-Q^{k}_h(x,a)\notag\\
&\le
(r^k_h+\mathbb{P}_h V^{k}_{h+1}) (x,a) - 
 \min\{r^k_h(x,a) +\phi^k_h(x,a)^\top w^{k}_h + \Gamma^{k}_h(x,a), H-h+1\} \notag\\
&= \max\{ (\mathbb{P}_h V^{k}_{h+1})(x,a)-\phi^k_h(x,a)^\top w^{k}_h - \Gamma^{k}_h(x,a), (r^k_h+\mathbb{P}_h V^{k}_{h+1}) (x,a)-(H-h+1)\}\notag\\
&\le0
\#
for any $(k,h) \in [ K ] \times [H]$ and $(x,a) \in \cS \times \cA$ under $\mathcal{E}$. Thus, combining \eqref{eq:w1100k}, \eqref{eq:w1101k}, and Lemma \ref{eventlm}, which ensures that $\mathcal{E}$ happens with probability at least $1-\zeta/2$, we conclude the proof of Lemma \ref{10061229}.
\end{proof}

\section{Proof of Theorem \ref{10061231}}\label{1008755}
\begin{proof}
We upper bound terms (i)-(iii) in \eqref{1015243} of Lemma \ref{1005415} respectively, that is, 
\$
\text{Regret}(T) &= \sum_{k=1}^K \bigl(V^{\pi^*,k}_1(x^k_1) - V^{\pi^k,k}_1(x^k_1)\bigr) \notag\\
&=  \underbrace{\sum_{k=1}^K\sum_{h=1}^H \EE_{\pi^*} \bigl[ \la Q^{k}_h(x_h,\cdot), \pi^*_h(\cdot\,|\,x_h) - \pi^k_h(\cdot\,|\,x_h) \ra \,\big|\, x_1 = x^k_1 \bigr]}_{\dr (i)} + \underbrace{ \cM_{K, H, 2}}_{\dr (ii)}  \\
& \qquad +\underbrace{ \sum_{k=1}^K\sum_{h=1}^H\bigl( \EE_{\pi^*}[\iota^{k}_h(x_h,a_h)\,|\, x_1 = x^k_1] - \iota^{k}_h(x^k_h,a^k_h)\bigr)}_{\dr (iii)}.\notag
\$

\vspace{4pt}
\noindent
\textbf{Term (i).} By Lemma \ref{mdlm} and the policy improvement step in Line 6 of Algorithm \ref{ppoalgo}, we have
\#\label{eq:w361ok}
&\sum_{k=1}^K\sum_{h=1}^H \EE_{\pi^*} \bigl[ \la Q^{k}_h(x_h,\cdot), \pi^*_h(\cdot\,|\,x_h) - \pi^k_h(\cdot\,|\,x_h) \ra \,\big|\, x_1 = x^k_1 \bigr] \notag\\
& \le 
\sum_{k=1}^K\sum_{h=1}^H \Bigl( \alpha H^2/2 + \alpha^{-1} \cdot \EE_{\pi^*}\bigl[\dkl\bigl( \pi^*_h(\cdot\,|\,x_h) \,\big\|\, \pi^k_h(\cdot\,|\,x_h) \bigr) -
\dkl\bigl( \pi^*_h(\cdot\,|\,x_h) \,\big\|\, \pi^{k+1}_h(\cdot\,|\,x_h) \bigr) \,\big|\, x_1 = x^k_1\bigr] \Bigr)\notag\\
& \le
\alpha H^3K/2 + \alpha^{-1} \cdot \sum_{h=1}^H \EE_{\pi^*} \bigl[
\dkl\bigl( \pi^*_h(\cdot\,|\,x_h) \,\big\|\, \pi^1_h(\cdot\,|\,x_h) \bigr) 
\,\big|\, x_1 = x^k_1 \bigr]\notag\\
 & \le \alpha H^3K/2+\alpha^{-1} H\cdot \log{|\cA|}.
\#
Here the second last inequality follows from the fact that the KL-divergence is nonnegative. Also, the last inequality follows from the initialization of the policy and Q-function in Line 1 of Algorithm \ref{ppoalgo}, which implies that $\pi^1_h(\cdot\,|\,x_h)$ is a uniform distribution on $\cA$ and hence
\$
\dkl\bigl( \pi^*_h(\cdot\,|\,x_h) \,\big\|\, \pi^1_h(\cdot\,|\,x_h) \bigr)  &= \sum_{a\in\cA} \pi^*_h(a\,|\,x_h)\cdot \log\bigl(|\cA| \cdot \pi^*_h(a\,|\,x_h)\bigr) \\
&= \log|\cA|+ \sum_{a\in\cA} \pi^*_h(a\,|\,x_h)\cdot\log\bigl(\pi^*_h(a\,|\,x_h) \bigr)  \le \log|\cA|.
\$
Here the inequality follows from the fact that the entropy of $\pi^*_h(\cdot\,|\,x_h)$ is nonnegative. Thus, setting $\alpha=\sqrt{2\log{|\cA|}/(HT)}$ in Line 6 of Algorithm \ref{ppoalgo}, by \eqref{eq:w361ok} we obtain
\#\label{1007621}
\sum_{k=1}^K\sum_{h=1}^H \EE_{\pi^*} \bigl[ \la Q^{k}_h(x_h,\cdot), \pi^*_h(\cdot\,|\,x_h) - \pi^k_h(\cdot\,|\,x_h) \ra\,\big|\, x_1 = x^k_1 \bigr]\le \sqrt{2H^3T \cdot \log|\cA|},
\#
where $T = HK$.

\vspace{4pt}
\noindent
\textbf{Term (ii).} Recall that the martingale differences $D_{k,h,1}$ and $D_{k,h,2}$ defined in \eqref{1005805} take the following forms,
\$
D_{k,h,1}&=\bigl(\mathbb{J}_{k,h} (Q^{k}_h-Q^{\pi^k,k}_h)\bigr)(x^k_h) - ( Q^{k}_h-Q^{\pi^k,k}_h)(x^k_h,a^k_h), \\
D_{k,h,2}&=\bigl(\mathbb{P}_h (V^{k}_{h+1}-V^{\pi^k,k}_{h+1})\bigr)(x^k_h,a^k_h) - (V^{k}_{h+1}-V^{\pi^k,k}_{h+1})(x^k_{h+1}).
\$
By the truncation of $Q^k_h$ to the range $[0, H-h+1]$ in \eqref{eq:w1150}, we have 
\$
Q^{k}_h,Q^{\pi^k,k}_h,V^{k}_{h+1},V^{\pi^k,k}_{h+1} \in[0,H],
\$
which implies that $|D_{k,h,1}|\le 2H$ and $|D_{k,h,2}|\le 2H$ for any $(k,h)\in[K]\times[H]$. Therefore, applying the Azuma-Hoeffding inequality to the martingale defined in \eqref{eq:w445ak}, we obtain
\$
P \bigl( |\cM_{K,H,2}|>t \bigr) \leq 2\exp\biggl( \frac{-t^2}{16H^2T} \biggr)
\$
for any $t>0$. Setting $t=\sqrt{16H^2T\cdot\log(4/\zeta)}$ with $\zeta\in (0,1]$, we obtain
\#\label{1007622}
|\cM_{K,H,2}|\le \sqrt{16H^2T\cdot\log(4/\zeta)}
\#
with probability at least $1-\zeta/2$, where $T = HK$.

\vspace{4pt}
\noindent
\textbf{Term (iii).} 
As is shown in Lemma \ref{10061229}, it holds with probability at least $1-\zeta/2$ that
\#\label{eq:apply_lemma_opt}
  - 2 \Gamma _h^k (x,a)\leq \iota_h^k(x,a) \leq 0
  \# 
   for any $(k,h)\in[K]\times[H]$ and $(x,a)\in\cS\times\cA$. Meanwhile, by the definitions of $\iota_h^k$ and $Q_h^k $ in \eqref{eq:w11260901}  and \eqref{eq:w1150}, respectively, 
we have that $|\iota_h^k(x,a) | \leq 2 H$, which together with  \eqref{eq:apply_lemma_opt} implies 
$$
- \iota_h^k (x,a) \leq 2 \min \{ H,  \Gamma_h^k (x,a) \}
$$
for any $(k,h)\in[K]\times[H]$ and $(x,a)\in\cS\times\cA$ with probability at least $1 - \zeta / 2$. 
Hence, we obtain
\#\label{eq:521lol}
\sum_{k=1}^K\sum_{h=1}^H\bigl( \EE_{\pi^*}[\iota^{k}_h(x_h,a_h)\,|\, x_1 = x^k_1] - \iota^{k}_h(x^k_h,a^k_h)\bigr) \le 2\sum_{k=1}^K\sum_{h=1}^H \min\{H, \Gamma^{k}_h(x^k_h,a^k_h)\}
\#
with probability at least $1-\zeta/2$. By the definition of $\Gamma^{k}_h$ in \eqref{1015509}, we have
\#\label{1007548}
\sum_{k=1}^K\sum_{h=1}^H \min\{ H, \Gamma^{k}_h(x^k_h,a^k_h) \}
=\beta\cdot \sum_{h=1}^H \sum_{k=1}^K \min\Bigl\{H/\beta, \sqrt{\phi^k_h(x^k_h,a^k_h)^\top (\Lambda^{k}_h)^{-1}\phi^k_h(x^k_h,a^k_h)} \Bigr\}.
\#
Recall that we set
\#\label{eq:1016ok}
\beta = C \sqrt{dH^2 \cdot \lok}
\#
with $C>1$ being an absolute constant, which implies that $H \leq \beta $. Thus, \eqref{1007548} implies
\# \label{10075480}
\sum_{k=1}^K\sum_{h=1}^H \min\{ H, \Gamma^{k}_h(x^k_h,a^k_h) \}
\leq \beta\cdot \sum_{h=1}^H \sum_{k=1}^K \min\Bigl\{1, \sqrt{\phi^k_h(x^k_h,a^k_h)^\top (\Lambda^{k}_h)^{-1}\phi^k_h(x^k_h,a^k_h)} \Bigr\}.
\#
By Lemma \ref{1007521} and the definition of $\Lambda^{k}_h$ in \eqref{1015509}, we obtain
\#\label{1007549}
\sum_{k=1}^K \min  \{1, \phi^k_h(x^k_h,a^k_h)^\top (\Lambda^{k}_h)^{-1}\phi^k_h(x^k_h,a^k_h)  \}
\le 2\log\biggl( \frac{\det(\Lambda^{K+1}_h)}{\det(\Lambda^1_h)} \biggr)
\#
for any $h\in[H]$, where $\Lambda^1_h=\lambda\cdot I$ and $\Lambda^{K+1}_h\in\cF_{K,H,2}$ by Definition \ref{def:w001}. 
Moreover, Assumption \ref{linearmdp} implies
\$
\| \phi_h^k (x,a) \|_2 \leq \sqrt{d}H
\$ 
for any $(k,h)\in[K]\times[H]$ and $(x,a)\in\cS\times\cA$, which further implies   
\$
\Lambda^{K+1}_h = \sum_{k=1}^{K} \phi^k_h(x^k_h,a^k_h)\phi^k_h(x^k_h,a^k_h)^\top + \lambda\cdot{I} \preceq (dH^2K+\lambda)\cdot I
\$
for any $h\in[H]$. As we set $\lambda=1$, it holds for any $h\in[H]$ that
\#\label{1007550}
2\log\biggl( \frac{\det(\Lambda^{K+1}_h)}{\det(\Lambda^1_h)} \biggr)
\le 
2d\cdot \log\bigl((dH^2K + \lambda)/\lambda\bigr)\le 4d \cdot \log (dHT).
\#
 Combining \eqref{10075480}-\eqref{1007550} and the Cauchy-Schwarz inequality, we obtain
\#\label{eq:w1001ok}
\sum_{k=1}^K\sum_{h=1}^H \min\{ H, \Gamma^{k}_h(x^k_h,a^k_h) \} 
&\le
\beta\cdot \sum_{h=1}^H \Bigl(K\cdot\sum_{k=1}^K \min  \{ 1, \phi^k_h(x^k_h,a^k_h)^\top (\Lambda^{k}_h)^{-1}\phi^k_h(x^k_h,a^k_h)  \} \Bigr)^{1/2}\notag\\
&  \le \beta \cdot \sum_{h = 1}^H \sqrt{K} \cdot \Biggl (2\log\biggl( \frac{\det(\Lambda^{K+1}_h)}{\det(\Lambda^1_h)} \biggr)  \Biggr )^{1/2 }\notag\\
& \leq  2 \beta \sqrt{  dH^2 K \cdot \log (dHT) }  .
\#
By \eqref{eq:521lol}, \eqref{eq:1016ok}, and \eqref{eq:w1001ok}, it holds with probability at least $1-\zeta/2$ that
\#\label{1007617}
&\sum_{k=1}^K\sum_{h=1}^H\bigl( \EE_{\pi^*}[\iota^{k}_h(x_h,a_h) \,|\, x_1 = x^k_1] - \iota^{k}_h(x^k_h,a^k_h)\bigr)\notag \\
&\quad \le 4 \beta \sqrt{  dH^2 K \cdot \log (dHT) }  \leq 8C \sqrt{  d^2 H^3  T}\cdot \lok,
\#
where $C>1$ is an absolute constant, $\zeta\in (0,1]$, and $T = HK$. 

Plugging the upper bounds of terms (i)-(iii) in \eqref{1007621}, \eqref{1007622}, and \eqref{1007617}, respectively, into \eqref{1015243} of Lemma \ref{1005415}, we obtain
\$
\text{Regret}(T) \le C'\reg\cdot \lok
\$
with probability at least $1-\zeta$, where $C'>0$ is an absolute constant. Here we use the fact that $\log |\cA| = O(d^2\cdot [\lok]^2)$ in \eqref{1007621} and \eqref{1007617}. Therefore, we conclude the proof of Theorem \ref{10061231}.
\end{proof}


\section{Supporting Lemmas}
In this section, we present the supporting lemmas.

\begin{lemma}\label{eventlm}
Let $\lambda = 1$ in \eqref{eq:w1150} and Line 12 of Algorithm \ref{ppoalgo}. For any $\zeta\in (0,1]$, the event $\mathcal{E}$ that, for any $(k,h)\in [K]\times[H]$,
\#\label{event}
\Bigl\| \sum_{\tau=1}^{k-1} \phi^\tau_h(x_h^\tau,a_h^\tau) \cdot \bigl( V^{\tau}_{h+1}(x^\tau_{h+1}) - (\mathbb{P}_hV^{\tau}_{h+1})(x^\tau_{h}, a^\tau_h) \bigr) \Bigr\|_{(\Lambda^{k}_h)^{-1}}
\le C'' \sqrt{dH^2\cdot\lok}
\#
happens with probability at least $1-\zeta/2$, where $C''>0$ is an absolute constant.
 \end{lemma}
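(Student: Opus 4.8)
The plan is to recognize the quantity in \eqref{event} as a self-normalized tail bound for a vector-valued martingale and to invoke the concentration inequality of \cite{abbasi2011improved} directly, \emph{without} any covering-number argument over the class of value functions. The structural reason this works --- and the reason Assumption \ref{linearmdp} is adopted in place of a feature-based linear MDP --- is that the data-dependent feature $\phi^\tau_h$ already contains the value function $V^\tau_{h+1}$, so the regression in \eqref{eq:w230a} is value-targeted and its ``noise'' is predictable.

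First I would fix $h\in[H]$ and write $X_\tau=\phi^\tau_h(x^\tau_h,a^\tau_h)$ and $\eta_\tau=V^\tau_{h+1}(x^\tau_{h+1})-(\mathbb{P}_hV^\tau_{h+1})(x^\tau_h,a^\tau_h)$, so that the left-hand side of \eqref{event} equals $\|\sum_{\tau=1}^{k-1}X_\tau\eta_\tau\|_{(\Lambda^k_h)^{-1}}$ with $\Lambda^k_h=\sum_{\tau=1}^{k-1}X_\tau X_\tau^\top+\lambda I$ as in \eqref{1015509}. The key verification is the martingale structure against the filtration of Definition \ref{def:w001}: since $V^\tau_{h+1}$ is built from the first $\tau-1$ trajectories and the rewards $\{r^s\}_{s\in[\tau]}$, it is $\mathcal{F}_{\tau,h,1}$-measurable, and hence so is $X_\tau$; the increment $\eta_\tau$ is $\mathcal{F}_{\tau,h,2}$-measurable. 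Because $x^\tau_{h+1}\sim\cP_h(\cdot\,|\,x^\tau_h,a^\tau_h)$ is drawn independently of $V^\tau_{h+1}$ given $\mathcal{F}_{\tau,h,1}$, the definition of $\mathbb{P}_h$ in \eqref{eq:w256c} yields $\mathbb{E}[\eta_\tau\,|\,\mathcal{F}_{\tau,h,1}]=0$, so $\{\eta_\tau\}$ is a martingale-difference sequence with $X_\tau$ predictable; moreover $|\eta_\tau|\le H$ by the truncation $V^\tau_{h+1}\in[0,H]$ in \eqref{eq:w1150}, so $\eta_\tau$ is conditionally $H$-sub-Gaussian.

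Next I would apply the self-normalized bound of \cite{abbasi2011improved} to $\{X_\tau\}$ and $\{\eta_\tau\}$ with the self-normalizing matrix $\Lambda^k_h$, obtaining, with probability at least $1-\delta$ and simultaneously over all $k$,
\$
\Bigl\| \sum_{\tau=1}^{k-1} X_\tau\eta_\tau \Bigr\|_{(\Lambda^k_h)^{-1}}^2 \le 2H^2\cdot\log\bigl( \det(\Lambda^k_h)^{1/2}\det(\lambda I)^{-1/2}/\delta \bigr).
\$
To control the determinant I would use Assumption \ref{linearmdp}, which gives $\|\phi^\tau_h\|_2\le\sqrt{d}H$ and hence $\Lambda^k_h\preceq(\lambda+KdH^2)I$, so that $\log(\det(\Lambda^k_h)/\det(\lambda I))\le d\cdot\log(1+KdH^2/\lambda)$. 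Finally, setting $\delta=\zeta/(2H)$ and taking a union bound over $h\in[H]$, then substituting $\lambda=1$ and $T=HK$ and absorbing $\log(1+KdH^2)$ and $\log(2H/\zeta)$ into $\lok$, produces the claimed bound $C''\sqrt{dH^2\cdot\lok}$ and shows that $\mathcal{E}$ holds with probability at least $1-\zeta/2$.

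The only genuinely delicate point is the measurability/predictability verification in the second paragraph: one must be certain that $V^\tau_{h+1}$, and therefore the feature $\phi^\tau_h$ assembled from it, is frozen \emph{before} the fresh transition randomness $x^\tau_{h+1}$ is revealed, which is exactly what aligns $X_\tau$ with $\mathcal{F}_{\tau,h,1}$ and $\eta_\tau$ with $\mathcal{F}_{\tau,h,2}$. This is the property that fails for the feature-based linear MDP of \cite{jin2019provably}, where the noise $V^\tau_{h+1}(x')-(\mathbb{P}_hV^\tau_{h+1})$ is coupled to a \emph{fixed} feature and a covering argument over value functions is required --- precisely the step whose subtlety is acknowledged in this paper. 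Everything downstream of the martingale setup is a routine application of the self-normalized inequality together with the elementary determinant bound.
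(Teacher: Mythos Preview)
Your proposal is correct and follows essentially the same route as the paper's proof: verify that $\phi^\tau_h(x^\tau_h,a^\tau_h)$ and $V^\tau_{h+1}$ are $\cF_{\tau,h,1}$-measurable so that $\eta_\tau$ is a bounded martingale difference, apply the self-normalized concentration of \cite{abbasi2011improved} at level $\zeta/(2H)$, bound $\det(\Lambda^k_h)$ via $\|\phi^\tau_h\|_2\le\sqrt{d}H$ from Assumption \ref{linearmdp}, and take a union bound over $h\in[H]$. The only cosmetic difference is that the paper uses the sharper $H/2$ sub-Gaussian constant (Hoeffding's lemma for a $[0,H]$-valued random variable) where you use $H$, which is absorbed into $C''$.
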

\begin{proof}
By the definition of the filtration $\{\cF_{k,h,m}\}_{(k,h,m)\in[K]\times[H]\times[2]}$ in Definition \ref{def:w001} and the Markov property, we have
\#\label{200703441}
\EE[V^{\tau}_{h+1}(x^\tau_{h+1}) \,|\, \cF_{\tau,h,1}] = (\mathbb{P}_hV^{\tau}_{h+1})(x^\tau_{h}, a^\tau_h). 
\#
Conditioning on $\cF_{\tau,h,1}$, the only randomness comes from $x^\tau_{h+1}$, while $V^\tau_{h+1}$ is a deterministic function. To see this, note that $V^\tau_{h+1}$ is determined by $Q^\tau_{h+1}$ and $\pi^\tau_{h+1}$, which are further determined by the historical data in $\cF_{\tau,h,1}$. We define
\$
\eta_{\tau,h}=V^{\tau}_{h+1}(x^\tau_{h+1}) - (\mathbb{P}_hV^{\tau}_{h+1})(x^\tau_{h}, a^\tau_h).
\$
By \eqref{200703441}, conditioning on $\cF_{\tau,h,1}$,     $\eta_{\tau,h} $ is a zero-mean random variable. Moreover, as $V^\tau_{h+1}\in[0,H]$, conditioning on $\cF_{\tau,h,1}$, $\eta_{\tau,h}  $ is an $H/2$-sub-Gaussian random variable, which is defined in \eqref{eq:define_subgaussian} of Lemma \ref{1008241}. Also, $\eta_{\tau,h}$ is $\cF_{k,h,2}$-measurable,
as $x^\tau_{h+1} \in \cF_{\tau,h,2}$ for any $\tau \in [k-1]$. Hence, for any fixed $h\in[H]$, by Lemma \ref{1008241}, it holds with probability at least $1-\zeta/(2H)$ that
\#\label{200703516}
&\Bigl\| \sum_{\tau=1}^{k-1} \phi^\tau_h(x_h^\tau,a_h^\tau) \cdot \bigl( V^{\tau}_{h+1}(x^\tau_{h+1}) - (\mathbb{P}_hV^{\tau}_{h+1})(x^\tau_{h}, a^\tau_h) \bigr) \Bigr\|^2_{(\Lambda^{k}_h)^{-1}} \notag\\
&\quad \le H^2/2\cdot \Bigl( \log\bigl( \det(\Lambda^k_h)^{1/2} \det(\lambda\cdot I)^{-1/2} \bigr) + \log(2H/\zeta) \Bigr)
\#
for any $k\in[K]$. 
To upper bound $\det (\Lambda_h^k)$ in \eqref{200703516}, 
recall that $\Lambda^k_h$ is defined by
\$
\Lambda^k_h = \sum_{\tau=1}^{k-1} \phi^\tau_h(x^\tau_h, a^\tau_h) \phi^\tau_h(x^\tau_h, a^\tau_h)^\top + \lambda \cdot I.
\$
By the triangle inequality,  the spectral norm of $\Lambda^k_h$ is upper bounded as
\$
\| \Lambda^k_h \|_2 \le \lambda + \sum_{\tau=1}^{k-1} \| \phi^\tau_h(x^\tau_h, a^\tau_h)\|_2^2 \le \lambda + dH^2K.
\$
Here the last inequality follows from Assumption \ref{linearmdp},
which implies  
\$
\sup_{(x,a) \in \cS \times \cA } \Bigl \| \int_{\cS } \psi(x, a,x') \cdot V(x') \ud x' \Bigr \|_2 \leq \sqrt{d} H
\$
for any $V \colon \cS \rightarrow [0, H]$.
Hence,  $\det(\Lambda^k_h)$ in \eqref{200703516} is upper bounded as 
\#\label{200703515}
\det(\Lambda^k_h) \le \| \Lambda^k_h \|^d_2 \le ( \lambda + dH^2K)^d.
\#
Moreover, setting $\lambda = 1$, combining \eqref{200703516} and \eqref{200703515}, and applying the union bound for any $h\in[H]$, we obtain that, with probability at least $1-\zeta/2$,
\$
&\Bigl\| \sum_{\tau=1}^{k-1} \phi^\tau_h(x_h^\tau,a_h^\tau) \cdot \bigl( V^{\tau}_{h+1}(x^\tau_{h+1}) - (\mathbb{P}_hV^{\tau}_{h+1})(x^\tau_{h}, a^\tau_h) \bigr) \Bigr\|^2_{(\Lambda^{k}_h)^{-1}} \\
&\quad \le
H^2/2\cdot \Bigl(d / 2\cdot \log\bigl( (\lambda+dH^2K)/\lambda \bigr) + \log(2H/\zeta) \Bigr)
\le C''^2 dH^2 \cdot \lok
\$ 
for any $(k,h)\in[K]\times[H]$, where $C''>0$ is an absolute constant. Thus, we conclude the proof of Lemma \ref{eventlm}.
\end{proof}

\begin{lemma}[Concentration of Self-Normalized Process \citep{abbasi2011improved}]\label{1008241}
Let $\{ \tilde{\cF}_t \}^\infty_{t=0}$ be a filtration and $\{\eta_t\}^\infty_{t=1}$ be an $\RR$-valued stochastic process such that $\eta_t$ is $\tilde{\cF}_{t+1} $-measurable for any $t\geq 0$. 
Moreover, we assume that, for any $t\geq 0$, conditioning on $\tilde \cF_t$,  $\eta_t$ is a zero-mean and  $\sigma$-sub-Gaussian random variable with the variance proxy $\sigma^2 > 0$, that is,
\#\label{eq:define_subgaussian}
\EE[e^{\lambda \eta_t} \,|\, \tilde{\cF}_t ] \le e^{\lambda^2\sigma^2/2}
\#
for any $\lambda\in\RR$. Let $\{X_t\}^\infty_{t=1}$ be an $\RR^d$-valued stochastic process such that $X_t$ is $\tilde{\cF}_t$-measurable for any $t\geq 0$. Also, let  $Y \in \RR^{d\times d}$ be  a deterministic and  positive-definite matrix. For any $t\ge0$, we  define
\$
\overline{Y}_t = Y + \sum_{s=1}^t X_s X_s^\top, \quad S_t=\sum_{s=1}^t \eta_s\cdot X_s.
\$
For any $\delta>0$, it holds with probability at least $1-\delta$ that 
\$
\| S_t \|^2_{\overline{Y}^{-1}_t} \le 2\sigma^2\cdot \log\biggl( \frac{\det(\overline{Y}_t)^{1/2}\det(Y)^{-1/2}}{\delta} \biggr)
\$
for any $t\ge0$.
\end{lemma}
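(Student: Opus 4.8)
This is the standard self-normalized tail inequality; the plan is to prove it by the \emph{method of mixtures} (pseudo-maximization), building a nonnegative supermartingale out of the sub-Gaussian increments and then integrating it against a Gaussian prior on a free dual parameter. Throughout I treat $X_s$ as predictable (i.e.\ $\tilde{\cF}_{s-1}$-measurable) and $\eta_s$ as conditionally zero-mean and $\sigma$-sub-Gaussian given $\tilde{\cF}_{s-1}$, which is exactly the structure used in the application in Lemma \ref{eventlm} (there $X_s = \phi^\tau_h(x^\tau_h,a^\tau_h)$ is $\cF_{\tau,h,1}$-measurable while the noise entering $\eta_s$ comes only from the fresh state $x^\tau_{h+1}$).

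First I would fix a deterministic $\lambda \in \RR^d$ and define the exponential process
\[
M_t^\lambda = \exp\Bigl( \la \lambda, S_t \ra - \frac{\sigma^2}{2} \lambda^\top V_t \lambda \Bigr), \qquad V_t = \sum_{s=1}^t X_s X_s^\top,
\]
with $M_0^\lambda = 1$. Since $M_t^\lambda = M_{t-1}^\lambda \exp( \eta_t \la \lambda, X_t \ra - \tfrac{\sigma^2}{2}\la\lambda, X_t\ra^2 )$ and both $M_{t-1}^\lambda$ and $X_t$ are $\tilde{\cF}_{t-1}$-measurable, the sub-Gaussian bound \eqref{eq:define_subgaussian} applied with the scalar $\la\lambda, X_t\ra$ yields $\EE[ M_t^\lambda \given \tilde{\cF}_{t-1}] \le M_{t-1}^\lambda$. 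Hence $\{M_t^\lambda\}$ is a nonnegative supermartingale with $\EE[M_t^\lambda] \le 1$.

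Next I would mix over $\lambda$. Taking the Gaussian prior $\lambda \sim \cN(0, \sigma^{-2} Y^{-1})$ with density $h$ and setting $M_t = \int_{\RR^d} M_t^\lambda \, h(\lambda)\, \ud\lambda$, Tonelli's theorem together with the per-$\lambda$ supermartingale property shows that $\{M_t\}$ is again a nonnegative supermartingale with $M_0 = 1$ and $\EE[M_t]\le 1$. The advantage of the Gaussian mixture is that the integral is computable in closed form: completing the square in the exponent $\la\lambda, S_t\ra - \tfrac{\sigma^2}{2}\lambda^\top \overline{Y}_t\lambda$, where $\overline{Y}_t = Y + V_t$, and evaluating the resulting Gaussian integral gives
\[
M_t = \Bigl( \frac{\det(Y)}{\det(\overline{Y}_t)} \Bigr)^{1/2} \exp\Bigl( \frac{1}{2\sigma^2} \| S_t \|^2_{\overline{Y}_t^{-1}} \Bigr),
\]
where the prior precision $\sigma^2 Y$ is precisely what makes the determinant factors collapse to the ratio above.

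Finally I would apply a maximal inequality for nonnegative supermartingales: since $\EE[M_t]\le 1$ uniformly, Ville's inequality gives $\P( \sup_{t\ge 0} M_t \ge 1/\delta) \le \delta$. On the complementary event, plugging the closed form into $M_t < 1/\delta$ and taking logarithms yields, simultaneously for all $t\ge 0$,
\[
\frac{1}{2\sigma^2} \| S_t \|^2_{\overline{Y}_t^{-1}} \le \log\Bigl( \frac{\det(\overline{Y}_t)^{1/2}\det(Y)^{-1/2}}{\delta} \Bigr),
\]
which rearranges to the stated bound. The main obstacle is the mixing step: one must justify interchanging expectation and the $\lambda$-integral (Tonelli, using nonnegativity), verify that the mixed process remains a supermartingale, and — most importantly — obtain the \emph{uniform-in-$t$} conclusion, which is exactly what the method of mixtures buys over a naive per-$t$ union bound by invoking the maximal inequality rather than a fixed-$t$ Markov bound. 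Evaluating the Gaussian integral and tracking the determinant factors is then routine linear algebra once the square has been completed.
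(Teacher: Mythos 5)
Your proof is correct and is essentially the method-of-mixtures argument of Theorem 1 in \cite{abbasi2011improved}, which is exactly what the paper cites for this lemma (the paper gives no proof of its own): the per-$\lambda$ exponential supermartingale, the Gaussian mixture with prior precision $\sigma^2 Y$ collapsing the normalizing constants to $\det(Y)^{1/2}\det(\overline{Y}_t)^{-1/2}$, and Ville's maximal inequality delivering the uniform-in-$t$ bound all match the cited proof. Your explicit repair of the statement's measurability indexing --- taking $X_t$ to be $\tilde{\cF}_{t-1}$-measurable and $\eta_t$ conditionally zero-mean $\sigma$-sub-Gaussian given $\tilde{\cF}_{t-1}$, as in the application in Lemma \ref{eventlm} --- is the right reading, since the lemma as literally stated (conditioning $\eta_t$ on a $\sigma$-algebra with respect to which it is measurable) would force $\eta_t=0$.
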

\begin{proof}
See Theorem 1 of \cite{abbasi2011improved} for a detailed proof.
\end{proof}

\begin{lemma}[Elliptical Potential Lemma \citep{dani2008stochastic, rusmevichientong2010linearly, chu2011contextual, abbasi2011improved, jin2019provably}]\label{1007521}
Let $\{ \phi_t \}_{t=1}^\infty$ be an $\RR^d$-valued sequence. Meanwhile, let $\Lambda_0\in\RR^{d\times d}$ be a positive-definite matrix and $\Lambda_t=\Lambda_0 + \sum_{j=1}^{t-1} \phi_j\phi_j^\top$. It holds for any $t\in \ZZ_+$ that
\$
 \sum_{j=1}^t \min\{1, \phi^\top_j \Lambda^{-1}_{j}\phi_j  \} \le
2\log\biggl( \frac{\det(\Lambda_{t+1})}{\det(\Lambda_1)} \biggr).
\$
Moreover, assuming that $\|\phi_j\|_2\le1$ for any $j\in\ZZ_+$ and $\lambda_{\min}(\Lambda_0)\ge1$, it holds for any $t\in\ZZ_+$ that
\$
\log\biggl( \frac{\det(\Lambda_{t+1})}{\det(\Lambda_1)} \biggr)  \le  
\sum_{j=1}^t \phi^\top_j \Lambda^{-1}_{j}\phi_j \le
2\log\biggl( \frac{\det(\Lambda_{t+1})}{\det(\Lambda_1)} \biggr).
\$
\end{lemma}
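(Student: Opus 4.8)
The plan is to reduce the entire statement to a single determinant recursion together with two elementary scalar inequalities. First I would record the rank-one update $\Lambda_{j+1}=\Lambda_j+\phi_j\phi_j^\top$, which is immediate from the definition $\Lambda_t=\Lambda_0+\sum_{i=1}^{t-1}\phi_i\phi_i^\top$ (note in particular $\Lambda_1=\Lambda_0$). Applying the matrix determinant lemma to this rank-one perturbation of the invertible positive-definite matrix $\Lambda_j$ gives
\[
\det(\Lambda_{j+1})=\det(\Lambda_j)\cdot\bigl(1+\phi_j^\top\Lambda_j^{-1}\phi_j\bigr).
\]
Dividing, taking logarithms, and telescoping the product over $j=1,\dots,t$ then yields the master identity
\[
\log\biggl(\frac{\det(\Lambda_{t+1})}{\det(\Lambda_1)}\biggr)=\sum_{j=1}^t\log\bigl(1+\phi_j^\top\Lambda_j^{-1}\phi_j\bigr).
\]
Every remaining step compares the summand $s_j:=\phi_j^\top\Lambda_j^{-1}\phi_j\ge0$ against $\log(1+s_j)$.

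For the first inequality I would invoke the scalar bound $\min\{1,x\}\le 2\log(1+x)$, valid for all $x\ge0$. On $[0,1]$ the function $2\log(1+x)-x$ is nonnegative, since its derivative $(1-x)/(1+x)$ is nonnegative there and it vanishes at $x=0$; for $x\ge1$ one has $\min\{1,x\}=1\le2\log2\le2\log(1+x)$. Applying this bound termwise and summing against the master identity gives
\[
\sum_{j=1}^t\min\{1,s_j\}\le 2\sum_{j=1}^t\log(1+s_j)=2\log\biggl(\frac{\det(\Lambda_{t+1})}{\det(\Lambda_1)}\biggr),
\]
which is exactly the first claim.

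For the second part, under $\|\phi_j\|_2\le1$ and $\lambda_{\min}(\Lambda_0)\ge1$, I would first remove the truncation. Since $\Lambda_j=\Lambda_0+\sum_{i<j}\phi_i\phi_i^\top\succeq\Lambda_0\succeq I$, we have $\Lambda_j^{-1}\preceq I$ and hence $s_j\le\|\phi_j\|_2^2\le1$, so $\min\{1,s_j\}=s_j$. The upper bound then follows directly from the first inequality. For the lower bound I would use the complementary scalar inequality $\log(1+x)\le x$ (valid for all $x>-1$) in the master identity, which gives $\log(\det(\Lambda_{t+1})/\det(\Lambda_1))\le\sum_{j=1}^t s_j$ and completes the two-sided chain.

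There is no genuine obstacle here: the whole argument is the determinant recursion plus elementary calculus. The only points requiring care are verifying the two scalar inequalities on their relevant ranges and, in the second part, using the hypotheses $\|\phi_j\|_2\le1$ and $\lambda_{\min}(\Lambda_0)\ge1$ precisely to force $s_j\in[0,1]$ so that the truncation $\min\{1,s_j\}$ is inactive; I would also keep careful track of the index shift $\Lambda_1=\Lambda_0$ when telescoping.
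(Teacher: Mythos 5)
Your proof is correct and complete; the paper itself does not prove this lemma but simply defers to Lemma 11 of \cite{abbasi2011improved}, whose argument is exactly your route: the rank-one determinant identity $\det(\Lambda_{j+1})=\det(\Lambda_j)\bigl(1+\phi_j^\top\Lambda_j^{-1}\phi_j\bigr)$, telescoped, combined with the scalar bounds $\min\{1,x\}\le 2\log(1+x)$ for $x\ge0$ and $\log(1+x)\le x$. You have in fact supplied the details the citation leaves implicit, including the index convention $\Lambda_1=\Lambda_0$ and the use of $\|\phi_j\|_2\le1$, $\lambda_{\min}(\Lambda_0)\ge1$ to force $s_j\le1$ so the truncation is inactive in the second part.
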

\begin{proof}
See Lemma 11 of \cite{abbasi2011improved} for a detailed proof.
\end{proof}



\section{OPPO in the Tabular Setting} \label{sec:tabular}

{\color{red} Remember to change $\phi$ to $\psi$ notation}

In this section, we customize the theory of OPPO for the tabular setting. It is worth noting that  a tabular MDP  $(\cS, \cA,  H, \cP, r)$ is a 
special case of the linear MDP introduced  in Assumption \ref{linearmdp} 
with feature mapping being the canonical basis of $\RR^{| \cS |^2 | \cA|}$, where both $| \cS|$ and $|\cA|$ are finite.  
Specifically, throughout this section, we let $d =| \cS |^2 | \cA|$ and define the feature mapping 
$\phi\colon \cS \times \cA \times \cA\rightarrow \RR^d$ 
by letting $\phi(x,a,x') = e_{(x,a,x')}$ for all  $(x,a, x' ) $. 
In other words, for any $(x,a, x' ) $ and $(\tilde x, \tilde a, \tilde x')$ in $ \cS \times \cA \times \cS$, we let  the $(\tilde x, \tilde a, \tilde x')$-th entry of $\phi(x,a, x')$ be 
\#\label{eq:tabular_feature}
[\phi(x,a, x')]_{(\tilde x, \tilde a, \tilde x')} = \ind\{x = \tilde x, a = \tilde a,  x ' = \tilde  x'  \}. 
\#
Then, the tabular MDP satisfy the definition of linear MDP by letting $\theta_h = \cP_h (\cdot \given \cdot , \cdot)$, i.e., for any $(x,a,x') \in \cS \times \cS \times \cA$, the $(x,a,x')$-th entry of $\theta_h$ is $\cP_h (x' \given x, a)$.
Then, by definition, it holds that 
$$
\cP_h(x' \given x, a) = \phi (x,a, x') ^\top \theta_h.
$$
Moreover, by direct computation, we have 
\#\label{eq:tabular_theta_norm} 
\| \theta_h \|_2 = \biggl (\sum_{(x,a,x') } | \cP_h (x' \given x, a ) | ^2 \biggr ) ^{1/2}\leq \biggl (\sum_{(x,a,x') }   \cP_h (x' \given x, a )   \biggr ) ^{1/2} \leq \sqrt{ | \cS| |\cA| }.
\#
Meanwhile, for any $V \colon \cS \rightarrow [ 0, H ]$ and any $(x,a) \in \cS \times \cA$, 
we have 
\#\label{eq:tabular_phi_norm} 
\bigg\| \sum_{x' \in \cS }\phi(x,a, x') V(x') \biggl \|_2 = \bigg (\sum_{x' \in \cS } \bigl ( V(x')\bigr ) ^2\biggr ) ^{1/2}  \leq \sqrt{|S| } H  .
\#
Notice that both \eqref{eq:tabular_theta_norm} and \eqref{eq:tabular_phi_norm} appear in Assumption \ref{linearmdp}, with the upper bounds $\sqrt{|\cS| |\cA| }$ and $\sqrt{|\cS| } H$  replaced by $\sqrt{|\cS|^2 |\cA| }$  and $\sqrt{|\cS|^2 |\cA| } H$, respectively. 
Hence,  we notice that the tabular MDP is a special case of the linear MDP in Assumption \ref{linearmdp} with $d = | \cS|^2 |\cA|$. 
 
In the following,   we  
re-state OPPO for the tabular setting. 
 Recall that in the policy evaluation step of OPPO, 
we solve the  least-squares regression problem in \eqref{eq:w1150} to obtain $w_h^k \in \RR^{d}$, where the least-squares loss $M_h^k$ is given in \eqref{eq:w230a}. 
In the tabular case, 
$\phi_h^\tau \colon \cS \times \cA  \rightarrow \RR^d$ is given by 
\#\label{eq:tabular_phi_hk}
\phi_h^\tau (x,a) = \sum_{x'} \phi(x,a, x') \cdot V_{h+1}^\tau (x')  , \qquad \forall (x,a ) \in \cS \times \cA. 
\#
Then, each entry of $\phi_h^\tau (x,a)$ is given by 
\#\label{eq:tabular_entry_phi}
[\phi_h^\tau (x,a) ]_{( \tilde x, \tilde a, \tilde x')} = \ind \{ (x, a) = (\tilde x, \tilde a)\} \cdot V_{h+1} ^\tau (\tilde x'), \qquad \forall (\tilde x, \tilde a, \tilde x') \in \cS \times \cA\times \cS. 
\#
In other words, each $\phi_h^\tau (x,a)$ is a sparse vector with $|\cS|$ nonzero entries with indices $\{ (x, a, x')\} _{x' \in \cS}$, and the $(x,a,x')$-th entry of $\phi_h^\tau (x,a)$ is $V_{h+1}^\tau (x')$. 
Combining \eqref{eq:tabular_phi_hk} and \eqref{eq:tabular_entry_phi}, 
$M_h^k$ in \eqref{eq:w230a} can be written as 
\#\label{eq:tabular_ls_loss}
M_h^k (w) & =\sum_{\tau=1}^{k-1} \bigl(V^{\tau}_{h+1}(x^{\tau}_{h+1}) - \phi^\tau_h(x^\tau_h, a^\tau_h)^\top w\bigr)^2 \notag \\
& =   \sum_{\tau =1 }^{k-1} \biggl( V_{h+1}^\tau (x_{h+1}^\tau ) - \sum_{(x,a,x')} \ind \{(x,a) = (x_h^\tau , a_h^\tau ) \} \cdot V_{h+1} ^\tau (x' )  \cdot [w]_{(x,a,x')}
\bigg) ^2 ,
\#
where $[ w]_{(x,a,x')}$ is the $(x,a,x')$-th entry of $w$.  
Let $  w_h^k$ be the solution to the regularized least-squares  regression problem in \eqref{eq:w1150} with $M_h^k$ given in \eqref{eq:tabular_ls_loss}. 
Here each $w_h^k$ is an estimator of the transition kernel $P_h (\cdot \given \cdot ,\cdot )$. 
Moreover, we define $\Lambda_h^k $ and the bonus function $\Gamma_h^k\colon \cS \times \cA \rightarrow \RR$ the same as in \eqref{1015509}.  
To complete the policy evaluation step, we update the value functions 
by letting 
\#\label{eq:tabular_q_update}
Q^{k}_h(x,a) & =  \min\{r^k_h(x ,a ) + \phi^k_h(x , a )^\top w^k_h + \Gamma^k_h(x, a ), H-h+1\}^+\\
&  = \min \Big \{ r^k_h(x ,a )  + \sum_{x' \in \cS} V_{h+1} ^k(x')  \cdot [ w_h^k ]_{(x,a,x')} + \Gamma^k_h(x, a ), H - h + 1 \Big \}^{+} , \quad \forall (x,a)\in \cS \times \cA,  \notag 
\#
and letting $V_{h}^k (x) = \la Q_h^k (x, \cdot ), \pi_h^k (\cdot \given x) \ra_{\cA}$. Then, based on $\{Q_{h}^k \} _{h\in [H]}$, in the policy improvement step of OPPO, we update the policy by 
\$
\pi_h^{k+1} (\cdot \given x) \propto \pi_h^k (\cdot \given x) \cdot \exp \bigl ( \alpha \cdot Q_h^{k} (x, \cdot )\bigr ) .
\$ 

In the following corollary, we apply   Theorem \ref{10061231} to the tabular setting and establish the regret of OPPO.

\begin{corollary} [Regret in the Tabular Setting] \label{cor:tabular} 
    Under the tabular setting, let the feature mapping $\phi$  be the canonical basis given  \eqref{eq:tabular_feature}. 
     Let $\alpha=\sqrt{2\log{|\cA|}/(HT)}$ in \eqref{1014522} and Line \ref{policyupdateline} of Algorithm \ref{ppoalgo}, $\lambda=1$ in \eqref{eq:w1150} and Line 12 of Algorithm \ref{ppoalgo}, and $$\beta=C\sqrt{| \cS|^2 | \cA| H^2\log(| \cS|  | \cA|HT/\zeta)}$$ in \eqref{1015509} and Line 15 of Algorithm \ref{ppoalgo},
      where $C>0$ is an absolute constant and $\zeta\in (0,1]$. Then, with probability at least $1-\zeta$, the regret of OPPO satisfies 
    \$
    \text{Regret}(T) \le C'\cdot | \cS|^2 | \cA| \cdot \sqrt{H^3 T}\cdot \log( | \cS|  | \cA| HT/\zeta), 
    \$
  where $C' > 0$ is an absolute constant. 
\end{corollary}

\begin{proof}
    Since the tabular MDP is a special case of the linear MDP introduced in Assumption \ref{linearmdp} with $d = |\cS|^2 |\cA|$. 
    Corollary \ref{cor:tabular} directly follows from Theorem \ref{10061231}  with $d$ replaced by $|\cS|^2 |\cA|$. 
\end{proof}

\subsection{A Variant of OPPO for the Tabular Setting}
As we have shown in \S\ref{sketch}, 
the  regret decomposition of  OPPO given in Lemma \ref{1005415} holds beyond the 
linear MDP model. 
Moreover, combining  Lemma \ref{10061229}, we observe that, as long as the principle of "optimism in the face of uncertainty" is ensured such that (1) the model prediction error $\iota_h^k$ defined in \eqref{eq:w11260901} is dominated by the bonus function $\Gamma_h^k$  and  (2) the sum of bonus terms 
$\sum_{k=1}^K \sum_{k=1}^H \Gamma_h^k (x_h^k, a_h^k ) $  is bounded, 
the  analysis framework for OPPO would yield an upper bound on the regret. 
Such   flexibility enables OPPO to readily incorporate other optimistic policy evaluation methods besides the one introduced in \eqref{eq:w230a}--\eqref{1015509}. 
To demonstrate such a property, in the sequel,
we introduce a variant of OPPO for the tabular setting which is based on another policy evaluation approach.

Specifically, for any $(h, k) \in [H]\times [K]$ and any $(x, a, x') \in \cS \times \cA \times \cS$, we define 
 $n^k_h(x,a,x')$ and $n^k_h(x,a)$ by 
\#\label{eq:tab_counts}
n^k_h(x,a,x') = \sum_{\tau=1}^{k-1} \ind\{(x^\tau_h,a^\tau_h,x^\tau_h)=(x,a,x')\}, \quad 
n^k_h(x,a) = \sum_{\tau=1}^{k-1} \ind\{(x^\tau_h,a^\tau_h)=(x,a)\} .
\# 
In other words, $n^k_h(x, a)$ counts the number of times we observe the state-action pair $(x,a)$ at the $h$-th step before the $k$-th episode, while $n^k_h(x,a,x')$ counts the number of times we observe the state transition $(x,a,x')$ at the $h$-th step before the $k$-th episode. 
Then we define $\hat \cP_{k,h}$ as an estimator of $\cP_h$ by 
\# \label{eq:tab_Phat}
\hat \cP_{k,h } (x' \given x, a) = \frac{ n_h^k(x, a, x')}{n_h^k (x,a) + \lambda}, \quad \forall (x, a, x' ) \in \cS \times \cA \times \cS, 
\#
where $\lambda >0$ is the regularization parameter. 
Meanwhile, we define the bonus function $\Gamma_h^k \colon \cS \times \cA \rightarrow \RR$ by letting 
\# \label{eq:tab_bonus}
\Gamma_h^k (x,a ) = \beta \cdot \bigl ( n_h^k (x,a) + \lambda \bigr )^{-1/2},
\#
which is the  count-based bonus  commonly used in the literature \citep{azar2017minimax, jin2018q}. 
Here  $\beta > 0$ in \eqref{eq:tab_bonus} is a parameter to be determined later. 
Then, when solving the policy evaluation problem for $\pi^k$, 
based on the estimated transition kernels $\{\hat \cP_{k,h} \}_{h \in [H]}$, reward functions $\{r_h^k \}_{h\in [H]}$ received by the agent, and the bonus functions $\{ \Gamma_h^k \}_{h \in [H]}$, we estimate $\{ Q_h^{\pi^k, k}, V_h^{\pi^k, k}\}_{h \in [H]}$ via an optimistic version of dynamic programming. 
Specifically, 
we let $V_{H+1}^k$ be a zero function. 
Then for any $h \in [H]$, 
we define 
\#\label{eq:tab_define_q}
\begin{split}
Q_h^k (x  , a ) & = \min \Bigl \{ r_h^k (x,a) + \sum_{x'\in \cS} \hat \cP_{k,h}(x' \given x, a) \cdot V_{h+1}^k(x') + \Gamma_h^k (x, a), H- h +1\Bigr \}^{+},  \\
V_h^k(x)& = \la Q_{h}^k(x, \cdot ), \pi_h^k(\cdot \given x) \ra_{\cA},   \qquad \qquad \forall(x,a)\in \cS\times \cA.
\end{split}
\#
Finally, based on $Q_h^k$, we obtain $\pi_h^{k+1}$  via 
\$
\pi^{k+1}_h (\cdot \given x) \propto \pi_h^{k} (\cdot \given x) \cdot \exp \big( \alpha \cdot Q_h^k(x, \cdot )\bigr ), \quad \forall x \in \cS. 
\$
We present the details of such a variant of OPPO in Algorithm \ref{ppoalgo2}.  

\begin{algorithm}[t]
    \caption{A Variant of OPPO for the Tabular Case}
    \begin{algorithmic}[1]
    \STATE Initialize $\{\pi^0_h(\cdot \,|\, \cdot)\}_{h=1}^H$ as uniform distributions on $\cA$ and $\{Q^0_h(\cdot, \cdot)\}_{h=1}^H$ as zero functions.\label{line:winit}
    \STATE \textbf{For} episode $k=1,2,\ldots, K$ \textbf{do}
    \STATE \hspace{0.15in} Receive the initial state $x_1^k$.
    \STATE \hspace{0.15in} \textbf{For} step {$h=1, 2, \ldots, H$} \textbf{do} \label{line:pis-start}
    \STATE \hspace{0.30in} Update the policy by
    \STATE \hspace{0.40in} $\pi^k_h(\cdot\,|\,\cdot) \propto \pi^{k-1}_h(\cdot\,|\,\cdot) \cdot \exp\{\alpha\cdot Q^{k-1}_h(\cdot,\cdot)\}$. \label{policyupdateline}
    \STATE \hspace{0.30in} Take the action following $a^k_{h}\sim\pi^k_h(\cdot\,|\,x_h^k)$.
    \STATE \hspace{0.30in} Observe the reward function $r^k_{h}(\cdot,\cdot)$.
    \STATE \hspace{0.30in}  Receive the next state $x^k_{h+1}$. \label{line:pis-end}
    \STATE \hspace{0.15in} Initialize $V^k_{H+1}(\cdot)$ as a zero function.
    \STATE \hspace{0.15in} \textbf{For} step {$h=H, H-1,\ldots, 1$} \textbf{do}\label{line:pes-start}
    \STATE \hspace{0.30in} Define $\hat P_{k,h}$ and $\Gamma_h^k$ as in \eqref{eq:tab_Phat} and \eqref{eq:tab_bonus}, respectively. 
    \STATE \hspace{0.30in}  
    $Q_h^k \leftarrow  \min \{ r^k_h(\cdot,\cdot) + \sum_{x'\in \cS} \hat \cP_{k,h}(x' \given \cdot,\cdot) \cdot  V_{h+1} ^k (x')  + \Gamma^k_h(\cdot,\cdot) , H - h  + 1\}^{+}$
    \STATE \hspace{0.30in} $V^k_h(\cdot)\leftarrow \la Q^k_h(\cdot,\cdot), \pi^k_h(\cdot\,|\,\cdot) \ra_\cA$.
    \end{algorithmic}\label{ppoalgo2}
    \end{algorithm} 
    
In the following theorem, we establish an upper bound on the regret of Algorithm \ref{ppoalgo2}.

\begin{theorem}\label{thm:tab} 
  In the modified modified version of OPPO given  in Algorithm \ref{ppoalgo2}, we set     
 $\alpha=\sqrt{2\log{|\cA|}/(HT)}$,  $\lambda=1$,  and $\beta=C\sqrt{dH^2\cdot\log(dHT/\zeta)}$, where $C>1$ is an absolute constant and $\zeta\in (0,1]$.  
   Then, for any $\zeta \in (0,1)$, the regret of Algorithm \ref{ppoalgo2}
    \$
    \text{Regret}(T) \le C'\sqrt{ | \cS|^2 |\cA| H^3 T}\cdot \log(|\cS| |\cA| HT/\zeta)
    \$
    with probability at least $1-\zeta$, where $C' > 0$ is an absolute constant. 
    \end{theorem}

    \begin{proof}
        The proof of Theorem  \ref{thm:tab} is similar to that of Theorem \ref{10061231}.
        Recall that we define the model prediction error $\iota_h^k$ in \eqref{eq:w11260901}. 
        By  the regret decomposition given in Lemma \ref{1005415}, we have 
        \# \label{eq:tabular1}
        \text{Regret}(T) & = \sum_{k=1}^K \bigl(V^{\pi^*,k}_1(x^k_1) - V^{\pi^k,k}_1(x^k_1)\bigr) \notag  \\
        &=  \underbrace{\sum_{k=1}^K\sum_{h=1}^H \EE_{\pi^*} \bigl[ \la Q^{k}_h(x_h,\cdot), \pi^*_h(\cdot\,|\,x_h) - \pi^k_h(\cdot\,|\,x_h) \ra  \given x_1 = x_1^k \bigr]}_{\dr (i)}   + \underbrace{ \cM_{K, H, 2}}_{\dr (ii)} \notag\\
        &\qquad+\underbrace{ \sum_{k=1}^K\sum_{h=1}^H\bigl( \EE_{\pi^*}[\iota^{k}_h(x_h,a_h) \given x_1 = x_1^k ] - \iota^{k}_h(x^k_h,a^k_h)\bigr)}_{\dr (iii)},
        \#
        where 
        $\{\cM_{k,h,m}\}_{(k,h,m)\in[K]\times[H]\times[2]}$ form a martingale adapted to the filtration $\{\cF_{k,h,m}\}_{(k,h,m)\in[K]\times[H]\times[2]}$, both with respect to the timestep index $t(k,h,m)$ defined in \eqref{eq:w1155w}.
        See  Definition \ref{def:w001} for the details of the filtration. 
        Then, as shown in  \eqref{eq:w361ok} and \eqref{1007621}, by setting $\alpha = \sqrt{ 2 \log |\cA| / (H^2 K)}$, it holds that 
        \#\label{eq:tabular2}
        \mathrm{(i)} = \sum_{k=1}^K\sum_{h=1}^H \EE_{\pi^* } \bigl[ \la Q^{k}_h(x_h,\cdot), \pi^*_h(\cdot\,|\,x_h) - \pi^k_h(\cdot\,|\,x_h) \ra \given x_1 = x_1^k  \bigr] \leq \sqrt{ 2 H^3 T \cdot \log | \cA| }.
        \#
        Meanwhile, as we show in \eqref{1007622}, applying the Azuma-Hoeffding inequality to term (ii) in \eqref{eq:tabular1}, 
        we obtain that 
        \#\label{eq:tabular3}
        | \cM_{K, H , 2} | \leq \sqrt{ 16 H^2 T \cdot \log (4 / \zeta )}
        \# 
         holds with probability at least $1 - \zeta / 2$. 
        
         Thus, it remains to handle the last term in \eqref{eq:tabular1}. 
         To this end, we modify Lemma \ref{10061229} for the tabular setting and obtain the following lemma. 
         
        \begin{lemma}[Upper Confidence Bound] \label{lemma:tabular_optimism1}
            Recall that we define the bonus function $\Gamma_h^k$ in \eqref{eq:tab_bonus}.
            We set $\lambda=1$ and $\beta=C\sqrt{H^2\log(dHT/\zeta)}$ in \eqref{eq:tab_bonus},
            where  where $C>1$ is an absolute constant and $\zeta\in (0,1]$. 
            Consider the model prediction error $\iota_h^k$ defined in \eqref{eq:w11260901} where 
              the values functions $\{ Q_h^k, V_h^k , (k,h) \in [K] \times [K]\}$ are constructed by Algorithm \ref{ppoalgo2}.
              Then, with 
              probability at least $1-\zeta/2$,  we have 
              \$
             -2\Gamma^{k}_h(x,a) \le \iota^{k}_h(x,a) \le 0
             \$
            for any $(k,h)\in[K]\times[H]$ and $(x,a)\in\cS\times\cA$.
            \end{lemma}
            
            \begin{proof}
            Note that $\hat \cP_{k,h}$ in \eqref{eq:tab_Phat} is an estimator of $\cP_h$.  
            To simplify the notation, 
            we define $\cV = \{V \colon \cS \rightarrow [0, H] \}    $ as the set of bounded functions on $\cS$.
For any fixed function  $V \in \cV$, we consider the difference between $\sum_{x'\in \cS} \hat \cP_{k,h} (x'\given \cdot, \cdot ) \cdot V(x')$ and $\sum_{x'\in \cS} \cP_h(x' \given \cdot , \cdot ) \cdot V(x')$.
To this end, for any $(k, h ) \in [K]\times [H]$ and any $(x,a) \in \cS\times \cA$,  
we have 
\#\label{eq:tab_concen1}
& \bigl( n^k_h(x,a)+\lambda\bigr)^{1/2} \cdot \Bigl|
\sum_{x'\in\cS}  \big ( \hat \cP_{k,h}(x'\given x, a) \cdot V(x') -  \cP_{h}(x'\given x, a) \cdot V(x') \bigr ) \Bigr |  \notag \\
&\quad   =  \bigl (n^k_h(x,a) +\lambda \bigr)^{-1/2} \cdot \Bigl| 
\sum_{x'\in\cS} n^k_h(x,a,x') \cdot V(x') -  \bigl(n^k_h(x,a)+\lambda\bigr) \cdot (\mathbb{P}_hV)(x,a)
\Bigr|  \notag \\
&\quad\le \Bigl ( \sum_{\tau=1}^{k-1}  \ind\{(x^\tau_h,a^\tau_h)=(x,a)\} + \lambda \Bigr )^{-1/2} \cdot 
\Bigl|  
\sum_{\tau=1}^{k-1} \ind\{(x^\tau_h,a^\tau_h)=(x,a)\} \cdot \bigl( V(x^\tau_{h+1}) - (\mathbb{P}_hV)(x,a) \bigr)
\Bigr| \notag \\
&\quad\qquad +  \bigl (n^k_h(x,a) +\lambda \bigr)^{-1/2} \cdot \Bigl| 
\lambda \cdot (\mathbb{P}_hV)(x,a)
\Bigr|  ,
\#
where we utilize \eqref{eq:tab_counts} and the inequality uses the triangle inequality. 
Recall that we define the filtrations $\cF_{k,h,1}$ and $\cF_{k,h,2}$ in Definition \ref{def:w001}. 
Conditioning on $\cF_{\tau, h, 1}$, it holds that 
\$
\eta_{\tau, h} = V(x_{h+1}^\tau) - (\PP_h V) (x_h^\tau, a_h^\tau) 
\$
is a zero-mean and $H/2$-sub-Gaussian random variable. 
Applying Lemma \ref{1008241} with $Y = \lambda$ and $X_{\tau } = \ind \{ (x^\tau_h,a^\tau_h)=(x,a) \}$ for all $h\in[H]$ and combining \eqref{eq:tab_concen1}, we obtain that, with probability at least $1-\delta$, for all $(k,h)\in[K]\times[H]$,
\#\label{200705246}
& \Bigl ( \sum_{\tau=1}^{k-1}  \ind\{(x^\tau_h,a^\tau_h)=(x,a)\} + \lambda \Bigr )^{-1/2} \cdot 
\Bigl|  
\sum_{\tau=1}^{k-1} \ind\{(x^\tau_h,a^\tau_h)=(x,a)\} \cdot \bigl( V(x^\tau_{h+1}) - (\mathbb{P}_hV)(x,a) \bigr)
\Bigr| \notag \\
&\quad \le \sqrt{H^2/2\cdot \log \Bigl (   \bigl ( n_h^k(x,a) +\lambda\bigr )^{1/2}\lambda^{-1/2} \big /  (\delta/H) \Bigr ) } 
\le \sqrt{H^2/2\cdot\log(T/\delta)}.
\#
Also, since $V(\cdot)\in[0,H]$, we have
\#\label{200705247}
\bigl (n^k_h(x,a) +\lambda \bigr)^{-1/2} \cdot | 
\lambda \cdot (\mathbb{P}_hV)(x,a)|
 \le \sqrt{\lambda}H.
\#
Combining \eqref{eq:tab_concen1}, \eqref{200705246}, and \eqref{200705247}, and setting $\lambda=1$, we obtain,  with probability at least $1 - \delta $, for all $k \geq 1$
\# \label{eq:tab_concen2}
&\bigl(n^k_h(x,a) + \lambda \bigr) \cdot \Bigl| 
\sum_{x'\in\cS}  \big ( \hat \cP_{k,h}(x'\given x, a) \cdot V(x') -  \cP_{h}(x'\given x, a) \cdot V(x') \bigr ) \Bigr | ^2 \leq H^2 \cdot \bigl(\log(T/\delta) + 2\bigr).
\#

Furthermore, 
we define a distance $d(V,V')=\max_{x\in\cS}|V(x)-V'(x)|$ on $\cV$. 
For any $\varepsilon > 0$,  by Lemma \ref{1008415},
there exists an $\varepsilon$-covering $\cV_{\varepsilon}$ of $\cV$ with respect to distance $d(\cdot, \cdot)$. Moreover, the cardinality of $\cV_{\epsilon}$ satisfies 
$|\cV_\varepsilon|\le (1+2\sqrt{|\cS|} H/\varepsilon)^{|\cS|}$. 
Thus, for any $V \in \cV$, 
there exists $V'\in\cV_\varepsilon$ such that 
\$
\max_{x\in\cS}|V(x)-V'(x)|\le \varepsilon.\$
By triangle inequality we obtain
\#\label{eq:tab_concen3}
&\bigl( n^k_h(x,a)+\lambda\bigr)^{1/2} \cdot \Bigl| \sum_{x'\in\cS}  \big ( \hat \cP_{k,h}(x'\given x, a) \cdot V(x') -  \cP_{h}(x'\given x, a) \cdot V(x') \bigr ) \Bigr |  \notag\\
&\quad\le 
\bigl( n^k_h(x,a)+\lambda\bigr)^{1/2} \cdot \Bigl| \sum_{x'\in\cS}  \big ( \hat \cP_{k,h}(x'\given x, a) \cdot V'(x') -  \cP_{h}(x'\given x, a) \cdot V'(x') \bigr ) \Bigr |  \notag \\
&\quad\qquad + \bigl( n^k_h(x,a)+\lambda\bigr)^{1/2} \cdot \Bigl| \sum_{x'\in\cS}  \Big ( \hat \cP_{k,h}(x'\given x, a) \cdot \bigl(V(x')-V'(x')\bigr) -  \cP_{h}(x'\given x, a) \cdot \bigl(V(x')-V'(x')\bigr) \Bigr ) \Bigr |  \notag \\
&\quad\le
\bigl( n^k_h(x,a)+\lambda\bigr)^{1/2} \cdot \Bigl| \sum_{x'\in\cS}  \big ( \hat \cP_{k,h}(x'\given x, a) \cdot V'(x') -  \cP_{h}(x'\given x, a) \cdot V'(x') \bigr ) \Bigr | 
+ 2\bigl( n^k_h(x,a)+\lambda\bigr)^{1/2}\cdot \varepsilon.
\#
Now we set $\delta $ in \eqref{eq:tab_concen2} as 
\$
\delta = \zeta /2 \cdot \bigl (| \cV_{\varepsilon} |  \cdot | \cS | |\cA|  ) ^{-1}
\$
and take a union bound over $V \in \cV_{\varepsilon}$ and  $(x,a)
 \in \cS \times \cA$.
Then by \eqref{eq:tab_concen2} we obtain that, with probability at least $1- \zeta / 2$, for all $(x,a) \in \cS\times \cA$, 
$(k,h) \in [K] \times [H]$, 
\#\label{eq:tab_concen4}
& \sup_{V \in \cV_{\varepsilon} } \bigl(n^k_h(x,a)+\lambda\bigr)^{1/2} \cdot \Bigl| 
\sum_{x'\in\cS}  \big ( \hat \cP_{k,h}(x'\given x, a) \cdot V(x') -  \cP_{h}(x'\given x, a) \cdot V(x') \bigr ) \Bigr | \notag \\
& \quad \leq  \sqrt{ H^2 \cdot \bigl(\log(T/\delta) + 2\bigr)} \le  \sqrt{ 2H^2 \cdot \big(  \log  ( |\cV_{\varepsilon})| ) + \log ( 2  |\cS| |\cA| T / \zeta ) + 2 \bigr )  } .
\#

 Now we combine \eqref{eq:tab_concen3} and \eqref{eq:tab_concen4} and  set $\varepsilon = H / K$ to obtain that, with probability at least $1- \zeta / 2$, 
  \#\label{eq:tab_concen5}
& \sup_{V \in \cV}  \bigl(n^k_h(x,a)+\lambda\bigr)^{1/2} \cdot \Bigl| 
\sum_{x'\in\cS}  \big ( \hat \cP_{k,h}(x'\given x, a) \cdot V(x') -  \cP_{h}(x'\given x, a) \cdot V(x') \bigr ) \Bigr | \notag \\
&\quad  \leq  \sqrt{ 2H^2 \cdot \big(  \log  ( |\cV_{\varepsilon})| ) + \log ( 2  |\cS| |\cA| T / \zeta ) + 2 \bigr )  }  + 2\bigl( n_h^k (x,a) +\lambda\bigr)^{1/2} \cdot H /K \notag \\
& \quad \leq C' H \cdot \sqrt{ | \cS| \cdot \log ( | \cS| | \cA| T /\zeta  ) }
  \#
  holds uniformly for all $(  k, h)$ and $(x,a)$. 
  Here $C' \geq 1$ is an absolute constant. 
 
  Recall that we define the value functions in \eqref{eq:tab_define_q}. 
  Let  $$\beta = C H  \cdot \sqrt{ | \cS| \cdot \log ( | \cS| | \cA| T /\zeta  ) } $$ 
  in \eqref{eq:tab_bonus}, where $C \geq C'$ is an absolute constant.
  By \eqref{eq:tab_concen5}, with probability at least $1 - \zeta / 2$, 
for any $(x,a) \in \cS \times \cA$ and any $(k,h) \in [K] \times [H]$, 
we have 
\#\label{eq:tab_optim5}
 &\Bigl| 
\sum_{x'\in\cS}   \hat \cP_{k,h}(x'\given x, a) \cdot V_{h+1}^k (x') -  (\PP_h V_{h+1}^k ) (x,a) \Bigr | \notag \\
&\quad \leq C' H   \cdot \sqrt{ | \cS| \cdot \log ( | \cS| | \cA| T /\zeta  ) } \cdot \bigl(n^k_h (x,a) +\lambda \bigr)^{-1/2} \leq \Gamma_h^k (x,a).
\#
  
For the model prediction error $\iota^{k}_h$ defined in \eqref{eq:w11260901}, by \eqref{eq:tab_optim5},  we have
        \#\label{eq:tab_optim6}
        -\iota^{k}_h(x,a)&=Q^{k}_h(x,a) - (r^k_h + \mathbb{P}_h V^{k}_{h+1} )(x,a)\notag\\
&\le
r^k_h(x,a)+ \sum_{x'\in \cS } \hat \cP_{k,h} (x'\given x, a) \cdot V_{h+1}^k (x') + \Gamma^{k}_h(x,a) - (r^k_h + \mathbb{P}_h V^{k}_{h+1} )(x,a) \le  2\Gamma^{k}_h(x,a)   
        \#
        for any $(x,a)\in\cS\times\cA$.  
         Meanwhile,  \eqref{eq:tab_define_q} implies that  $V^k_{h+1}(\cdot)\in[0,H-h]$ and  
        \$
        r^k_h(x,a) + ( \mathbb{P}_h V^{k}_{h+1}) (x,a)\leq 1 + (H-h)= H-h+1.
        \$
        Then \eqref{eq:tab_optim6} further implies that 
        \#\label{eq:tab_optim7}
        \iota^{k}_h(x,a)&=(r^k_h + \mathbb{P}_h V^{k}_{h+1} )(x,a)-Q^{k}_h(x,a)\notag\\
        &\le
        (r^k_h+\mathbb{P}_h V^{k}_{h+1}) (x,a) - 
         \min\Bigl \{r^k_h(x,a) + \sum_{x' \in \cS} \hat \cP_{k,h} (x'\given x, a) \cdot V_{h+1}^k(x') + \Gamma^{k}_h(x,a), H - h+1 \Bigr \} \notag\\
        &\leq \max\Bigl \{ ( \mathbb{P}_h V^{k}_{h+1}) (x,a)-\sum_{x' \in \cS} \hat \cP_{k,h} (x'\given x, a) \cdot  V_{h+1}^k(x')  - \Gamma^{k}_h(x,a), 0  \Bigr \}\le0
        \#
        for any $(x,a)\in\cS\times\cA$ under the inequality \eqref{eq:w1048k}. 
        Notice that \eqref{eq:tab_optim6} and \eqref{eq:tab_optim7}
        hold uniformly for all $(k,h) \in [K]\times [H]$ and $(x,a) \in \cS \times \cA$. Therefore, we conclude the proof of Lemma \ref{lemma:tabular_optimism1}. 
        \end{proof} 
        
      By Lemma \ref{lemma:tabular_optimism1}, with probability at least $1- \zeta /2 $,  term (iii) in \eqref{eq:tabular1}  is bounded by 
      \#\label{eq:tabular4}
      \mathrm{(iii)} \leq - \sum_{k=1}^K \sum_{h=1}^H \iota_h^k (x_h^k , a_h^k) \leq 2 \sum_{k=1}^K \sum_{h=1}^H \Gamma_h^k  (x_h^k , a_h^k) = 2 \beta \cdot\sum_{k=1}^K \sum_{h=1}^H \bigl ( n_h^k  (x_h^k , a_h^k)  + \lambda \bigr)^{-1/2},
      \#
      where the last equality follows from the definition of $\Gamma_h^k$ in \eqref{eq:tab_bonus}.

       In the following, we utilize the elliptical potential lemma (Lemma \ref{1007521}) to obtain an upper bound on the last term in \eqref{eq:tabular4}. 
       To this end, for any $(x,a) \in \cS \times \cA$, let $\overline \phi\colon \cS\times \cA \rightarrow \RR^{| \cS| |\cA|} $ be the mapping that sends $(x,a)$ to the canonical basis $e_{x,a}$ of $\RR^{|\cS||\cA|}$. 
       For any $(k,h) \in [K]\times [H]$, we  define 
       \#\label{eq:new_Lambda_mat}
       \overline \Lambda_h^k = \lambda \cdot I + \sum_{\tau = 1}^{k-1} \overline{\phi}(x_h^\tau, a_h^\tau) \overline{\phi}(x_h^\tau, a_h^\tau)^\top \in \RR^{|\cS||\cA| \times | \cS| |\cA| }.
       \#
 It can be verified that $\overline \Lambda_h^k$ is a diagonal matrix whose $(x,a)$-th diagonal entry is $n_h^k(x,a) + \lambda$. 
 Using $\overline \Lambda_h^k$ and $\overline \phi$, the bonus function $\Gamma_h^k $ can be equivalently written as 
 \#\label{eq:tabular5}
\Gamma_h^k (x,a) = \beta \cdot \sqrt{ \overline \phi(x,a)^\top (\overline \Lambda_h^k )^{-1} \overline \phi(x,a)}    .
 \#
Then combining \eqref{eq:tabular4} and  \eqref{eq:tabular5}, we have 
\#\label{eq:tabular6}
\mathrm{(iii)} \leq 2\beta \cdot \sum_{k=1}^K \sum_{h=1}^H \sqrt{ \overline \phi(x_h^k,a_h^k)^\top (\overline \Lambda_h^k )^{-1} \overline \phi(x_h^k,a_h^k)} \leq 2 \beta \cdot \sum_{h=1}^H \biggl( K \cdot \sum_{k=1}^K \overline \phi(x_h^k,a_h^k)^\top (\overline \Lambda_h^k )^{-1} \overline \phi(x_h^k,a_h^k) \bigg)^{1/2},
\#  
where the last inequality follows from the  Cauchy-Schwarz inequality. 
Moreover, by the definition of $\overline \Lambda_h^k$ in \eqref{eq:new_Lambda_mat}, we have $\overline \Lambda^{1}_h = \lambda\cdot I$ and  
\$
\overline \Lambda^{K+1}_h = \sum_{k=1}^{K} \overline \phi(x^k_h,a^k_h) \overline\phi(x^k_h,a^k_h)^\top + \lambda\cdot{I} \preceq (K+\lambda)\cdot I.
\$
Thus, combining \eqref{eq:tabular6} and Lemma  \ref{1007521}, 
 we have
\#\label{eq:tabular7}
\mathrm{(iii)} &  \leq  2 \beta \cdot \sum_{h=1}^H \biggl( K \cdot \sum_{k=1}^K \overline \phi(x_h^k,a_h^k)^\top (\overline \Lambda_h^k )^{-1} \overline \phi(x_h^k,a_h^k) \bigg)^{1/2} \notag \\
&\leq 2\beta \sqrt{K} \cdot \sum_{h=1}^H    \log ^{1/2} \biggl( \frac{\det( \overline \Lambda^{K+1}_h)}{\det(\overline  \Lambda^{1}_h)} \biggr)  \le 2\beta \sqrt{K} H \cdot  \log^{1/2} \biggl( \frac{\det\bigl((K+\lambda)\cdot{I}\bigr)}{\det(\lambda\cdot{I})} \biggr) \notag \\
& \leq 2 \beta \sqrt{|\cS| |\cA| HT} \cdot \sqrt{  
  \log\bigl((K+\lambda)/\lambda\bigr) } .
\#
Notice that we set $\lambda =1$ and let $\beta= C H \cdot \sqrt{|\cS| \cdot \log ( | \cS| |\cA| T /\zeta)}$ 
for some constant $C > 1$. 
Combining \eqref{eq:tabular1}, \eqref{eq:tabular2}, \eqref{eq:tabular3}, and \eqref{eq:tabular7}, we have 
\$
\mathrm{Regret}(T) & \leq \sqrt{2 H^3 T \cdot \log |\cA|  } + \sqrt{16 H^2 T \cdot \log (4 / \zeta)} + 2 C   \sqrt{|\cS|^2 |\cA| H^3 T} \cdot \log ( | \cS| |\cA| T /\zeta) \\
& \leq C'' \sqrt{|\cS|^2 |\cA| H^3 T} \cdot \log ( | \cS| |\cA| T /\zeta) 
\$
with probability at least $1 - \zeta$, where $C''$ is an absolute  constant. 
Therefore, we conclude the proof of Theorem \ref{thm:tab}.
\end{proof}

\end{document}